%% file: main.tex
\documentclass[runningheads]{llncs}

\usepackage{eccv}

\usepackage{eccvabbrv}

\usepackage{graphicx}
\usepackage{booktabs}

\usepackage{graphicx}%
\usepackage{multirow}%
\usepackage{amsmath,amssymb,amsfonts}%
\usepackage{mathrsfs}%
\usepackage[title]{appendix}%
\usepackage{xcolor}%
\usepackage{textcomp}%
\usepackage{manyfoot}%
\usepackage{booktabs}%
\usepackage{algorithm}%
\usepackage{algorithmicx}%
\usepackage{algpseudocode}%
\usepackage{listings}%
\usepackage{array}

\usepackage{bibunits}
\usepackage{ragged2e}

\newcommand{\NAME}{BaguanHR\xspace}
\newtheorem{thm}{Theorem}

\usepackage[accsupp]{axessibility}  

\usepackage{hyperref}

\usepackage{orcidlink}

\defaultbibliographystyle{splncs04}
\defaultbibliography{main}

\begin{document}

\title{Pushing the Limits of High-Resolution Weather Forecasting through Data Scaling} 

\titlerunning{High-Resolution Weather Forecasting via Data Scaling}

\author{
Yang Zhao\textsuperscript{*}\inst{1,2,4}\orcidlink{0009-0000-5556-1778} \and
Peisong Niu\textsuperscript{*}\inst{3,4}\orcidlink{0009-0007-7023-0900} \and
Tian Zhou\textsuperscript{*}\inst{3,4}\orcidlink{0000-0003-1789-5413}\and
Ziqing Ma\inst{4}\orcidlink{0000-0003-1567-5054}\and
Guanlong Ma\inst{1,2}\orcidlink{0009-0008-7350-3991}\and
Rong Jin\inst{4}\and
Huiling Yuan\textsuperscript{\dag}\inst{1,2}\orcidlink{0000-0003-4725-9039}\and
Liang Sun\textsuperscript{\dag}\inst{3,4}\orcidlink{0009-0002-5835-7259}
}

\authorrunning{Y. Zhao et al.}

\institute{
State Key Laboratory of Severe Weather Meteorological Science and Technology, Nanjing University, Nanjing, China \\
\and School of Atmospheric Science, Nanjing University, Nanjing, China \\
\and Ant Healthcare, Ant Group, Hangzhou, China \\
\and DAMO Academy, Alibaba Group, Hangzhou, China \\
\email{\{zhaoy2024,guanlongma\}@smail.nju.edu.cn}, \email{yuanhl@nju.edu.cn}
\email{\{niupeisong.nps,tian.zt,ls.537724\}@antgroup.com}
\email{maziqing.mzq@alibaba-inc.com},
\email{rongjinemail@gmail.com}
}

\maketitle

\begingroup
    \renewcommand{\thefootnote}{}
    \footnotetext{\textsuperscript{*}These authors contributed equally to this work.}
    \footnotetext{\textsuperscript{\dag}Corresponding authors.}
\endgroup

\begin{abstract}

\input{sections/0_abstract}
  \keywords{Weather forecasting \and Scaling law \and High-resolution}
\end{abstract}

\section{Introduction}\label{sec1}
\input{sections/1_introduction_v2}
\section{Related Work}\label{sec2}

\input{sections/2_related_works}
\section{Rationale for Synthetic-to-Real Transfer}\label{sec3}
\input{sections/3_methods}
\section{Experiments}\label{sec4}
\input{sections/4_results}
\section{Conclusion}\label{sec4}
\input{sections/5_conclusion}
\section*{Acknowledgements}
\input{sections/6_acknowledge}

\newpage

%
%
\bibliographystyle{splncs04}
\bibliography{main}

\newpage
\appendix 
\section*{Supplementary Material} 
\input{sections/appendix}



\end{document}

%% file: sections/0_abstract.tex
The development of $0.1^{\circ}$ global weather forecasting models based on machine learning (ML) is constrained by the limited availability of high-resolution data, as decades of reanalysis are only available at $0.25^{\circ}$ resolution.
While existing approaches fine-tune $0.25^\circ$ forecast models on limited $0.1^\circ$ samples, we show that this transfer is hindered by the irreversible information loss inherent in coarse-resolution forecasting. 
Therefore, we propose \textbf{\NAME}, a framework that shifts the focus from transferring models to transferring data.
We first show that super-resolution (SR) has lower conditional entropy and input amplification than forecasting, making it a more robust vehicle for resolution transfer. By leveraging this advantage through variable-wise SR, 
we synthesize extensive $0.1^\circ$ data from ERA5. 
\NAME's performance on the synthetic-plus-real dataset exceeds both ML-based methods and IFS-HRES, achieving superior performance across over 85\% of the lead times within 72 hours.
Furthermore, our findings highlight a power-law scaling effect, as a twofold increase in data reduces RMSE by 4.6\% for 72-hour forecasting and 4.9\% for 120-hour forecasting.
Our results demonstrate that scaling high-resolution ML-based forecasting is primarily a data bottleneck, and that variable-wise super-resolution provides a simple yet general solution to unlock long coarse-resolution reanalyses for high-resolution training.

%% file: sections/1_introduction_v2.tex
The field of global weather forecasting is witnessing a profound change, with data-driven models now outperforming established numerical weather prediction systems at medium ranges~\cite{pangu_nature, niu2025utilizing, graphcast, Fuxi_nature, Chen2023FengWuPT,lang2024aifsecmwfsdatadriven,pathak2022fourcastnet}.
However, advancing these models toward higher spatial resolution remains fundamentally constrained by the availability of long, consistent, high-resolution global datasets. For example, the European Center for Medium-Range Weather Forecasts (ECMWF) operational analysis provides global $0.1^\circ$ (roughly 9km) data only after 2016~\cite{malardel2016new}, yielding at most about 10 years of training data up to 2026, which is far too limited for training large-scale ML forecast models without severe overfitting. In contrast, 
reanalysis data such as ERA5~\cite{hersbach2020era5} provides 47 years of data (up to 2026) only at a coarser $0.25^\circ$ resolution,
creating a resolution--data imbalance that has become a critical bottleneck for high-resolution ML weather forecasting.

A recent line of work attempts to overcome this challenge by transferring a pretrained $0.25^\circ$ forecast model to $0.1^\circ$ through architectural design and fine-tuning on the small amount of $0.1^{\circ}$ high-resolution data~\cite{Bodnar2024AuroraAF, han2024fengwu}. This strategy leverages prior dynamical knowledge encoded in the coarse model but is inherently limited by two factors. First, the pretrained coarse-resolution model cannot fully exploit the rich information contained in the $0.1^\circ$ analysis fields because its learned representation is tied to the low-resolution grid. Second, weather prediction is a significantly more complex task than downscaling; adapting a coarse-resolution forecast model to a finer grid requires learning both fine-scale processes and the correct multiscale dynamics, which is difficult to achieve through limited fine-tuning. As a result, coarse-to-fine transfer often yields restricted accuracy improvements and depends heavily on sophisticated architectural modifications.

We propose a paradigm shift: transfer data, not models. \textbf{Our core insight is that super-resolution (SR) is a same-time conditional reconstruction task with provably lower conditional entropy than multi-step forecasting.} Following this insight, SR should be (i) intrinsically easier to learn, (ii) more stable under imperfect inputs, and (iii) suitable as a scalable mechanism to break the data bottleneck in high-resolution forecasting. We test these implications through a set of experiments that sequentially close this reasoning loop.

\begin{figure}[htbp]
    \centering
    \includegraphics[width=0.9\textwidth]{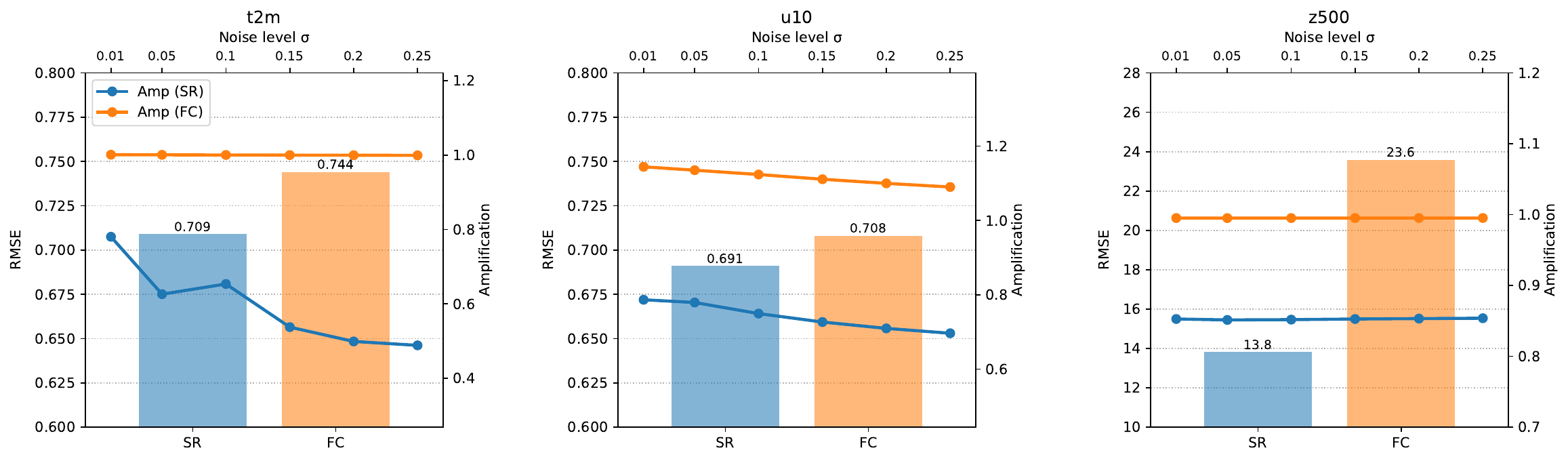}
    \caption{Entropy-based Robustness evaluation for t2m, u10, and z500. The $x$-axis corresponds to different noise levels, and the $y$-axis is the RMSE and amplification factor. \textbf{Lower amplification factors reflect greater inherent stability.}}
    \label{fig:why_data_transfer_works}
\end{figure}

We investigate the mechanisms underlying our approach through two primary lenses. First, we analyze the robustness of SR and forecasting (FC) for t2m, u10 and z500, as shown in Fig.~\ref{fig:why_data_transfer_works}. By measuring the error growth per unit of injected Gaussian noise (amplification factor) for both tasks, we find that: (i) \textbf{SR is a simpler task}, yielding much lower RMSE than one-step forecasting (e.g., 13.8 vs.\ 23.6 for z500), which aligns with \cite{mardani2023generative, guen2020disentangling}; and (ii) \textbf{SR is more stable}, with an amplification factor ($0.6$--$0.8\times$) significantly lower than that of forecasting ($1.0$--$1.2\times$).
Second, our \textbf{data scaling law analysis} (Sec.~\ref{results:scaling_of_data}) demonstrates that $0.1^\circ$ forecasting is strongly data-limited, exhibiting power-law improvements as training volume increases. Specifically, expanding the training data from 7 to 18 years reduces RMSE by more than 4.5\% at long lead times.

Building on these findings, we employ a \textbf{synthetic-plus-real} strategy. Specifically, we train per-variable SR models on limited paired $0.1^\circ$--$0.25^\circ$ samples, apply them to decades of ERA5 to synthesize high-resolution $0.1^\circ$ training data, and then train $0.1^\circ$ forecast models from scratch. This data-transfer approach substantially outperforms coarse-to-fine baselines without specialized architectures or fine-tuning, indicating that scaling high-resolution ML forecasting is best addressed by data construction via variable-wise super-resolution. 

The main contributions of our work are as follows:
\begin{itemize}

\item We propose \NAME, a framework leveraging the low conditional entropy of super-resolution (SR) as a robust data augmentation strategy for high-resolution weather forecasting. \NAME outperforms various ML-based models and IFS-HRES. Specifically, over \textbf{85\%} of lead times show superior performance within 72 hours, highlighted by a \textbf{4.0\%} RMSE reduction compared specifically to IFS-HRES.


\item We conduct a comprehensive investigation from both experimental and theoretical perspectives to elucidate the mechanisms underlying the effectiveness of synthetic-plus-real strategy. By leveraging the inherent robustness of per-variable SR, our approach achieves up to a 20\% reduction in RMSE compared to traditional coarse-to-fine methods, establishing a more stable framework for high-resolution modeling.

\item We establish fundamental data scaling laws in high-resolution forecasting, revealing that performance improves following a power law as data volume increases. Specifically, expanding the training dataset from 7 to 18 years yields a \textbf{4.9\%} reduction in RMSE for long-term (120-hour) forecasts and \textbf{4.6\%} for 72-hour forecasts.

\item Our study highlights a shift in perspective: for high-resolution ML-based weather forecasting, data construction via super-resolution proves more scalable and effective than architectural design or transfer learning. As the data availability is the primary bottleneck in high-resolution forecasting, our paradigm addresses the data directly, and can be expanded to other high-resolution tasks. 


\end{itemize}

%% file: sections/2_related_works.tex
\subsection{Data-Driven Weather Forecasting Models}

Moving beyond the resolution limits of $0.25^{\circ}$ models~\cite{graphcast, pangu_nature, Fuxi_nature, Chen2023FengWuPT, niu2025utilizing, Bodnar2024AuroraAF}, recent $0.1^{\circ}$ forecasting efforts~\cite{Bodnar2024AuroraAF, han2024fengwu} focus on transferring knowledge from pretrained coarse-grained systems. Aurora~\cite{Bodnar2024AuroraAF} introduces bilinear upscaling for patch alignment, while Fengwu-GHR~\cite{han2024fengwu} employs identity-preserving mappings for high-resolution extrapolation. These methods highlight a growing trend of leveraging transfer learning to achieve operational-grade precision, but they still treat high-resolution forecasting primarily as a model-transfer problem under limited $0.1^{\circ}$ data. In contrast, our approach tackles the same challenge from a data-centric perspective by expanding the effective $0.1^{\circ}$ dataset via super-resolution. This data-centric route is complementary to neural-operator methods, including Fourier Neural Operators and coarse-graining operators~\cite{DBLP:conf/iclr/LiKALBSA21, wang2024coarse}, which address multi-resolution physical modeling from an architecture-centric perspective.

\subsection{Scaling in Weather Forecasting Task}


Inspired by the scaling laws in NLP~\cite{kaplan2020scaling} and CV~\cite{zhai2022scaling, cherti2023reproducible}, recent studies have begun exploring the relationship between compute, data, and performance in meteorology. While~\cite{niu2025utilizing} analyzed model and data scaling at resolutions below $0.25^{\circ}$, \cite{yu2025scaling} reported fundamental power laws across model size and compute budget. However, \textbf{whether these performance gains persist when scaling data at $0.1^{\circ}$ resolution} remains an open question, especially given the scarcity of high-resolution analysis data. This motivates our investigation into effective data-scaling strategies within high-resolution regimes.

%% file: sections/3_methods.tex
\subsection{Robustness Evaluation: Super-Resolution vs. Forecasting}

To justify the adoption of Super-Resolution as a superior data augmentation strategy, we conduct a comparative analysis between two paradigms: forecasting (temporal evolution) and super-resolution (spatial reconstruction) under varying levels of input noise. The formulations for these tasks are defined as follows:

\begin{itemize}
    \item \textbf{High-Resolution Forecasting (FC):} \NAME is designed to generate global medium-range forecasts at a $0.1^{\circ}$ spatial resolution ($H=1801, W=3600$). Let $X_t \in \mathbb{R}^{V \times H \times W}$ represent the atmospheric state at time $t$, where $V$ denotes the number of variables. The forecasting model aims to learn a mapping $\Phi_F: X_t \to X_{t+\Delta t}$, where $\Delta t=6$ hours. Future states are predicted in an auto-regressive manner, utilizing the previous output as input for the subsequent step.
    
    \item \textbf{Super-Resolution (SR):} To augment the training data, we define a spatial reconstruction task to bridge the gap between coarse reanalysis and high-resolution analysis data. Given a low-resolution state $Z_t \in \mathbb{R}^{V \times H' \times W'}$ ($0.25^{\circ}$), the SR model learns a mapping $\Phi_{SR}: Z_t \to \hat{X}_t$, producing a $0.1^{\circ}$ pseudo-label $\hat{X}_t$ that maintains physical consistency with the source data while enhancing spatial detail.
\end{itemize}

To evaluate model robustness, we perturb input fields with Gaussian noise $\delta \sim \mathcal{N}(0, \sigma^2)$ at varying intensities $\sigma$. Sensitivity is quantified by an \textbf{amplification factor}, defined as the change in prediction error relative to the baseline, normalized by the noise magnitude: $(|\text{RMSE}_\sigma - \text{RMSE}_0|/\sigma)$. This metric captures the marginal response of SR and FC models to input perturbations.
By isolating the incremental error caused solely by input perturbations, this factor reveals the intrinsic stability of the model's mapping function. A high amplification factor suggests that small input fluctuations are significantly magnified during the inference process, whereas a low factor signifies a resilient architecture.

Our analysis reveals that the SR approach offers distinct advantages over the FC method in two key aspects. First, as illustrated in Fig.~\ref{fig:why_data_transfer_works}, SR consistently exhibits significantly lower amplification factors than forecasting across all tested meteorological variables, including t2m, u10, and z500. Second, SR maintains a substantial advantage in RMSE compared to the FC model at every noise intensity. Together, these findings suggest that SR is fundamentally less sensitive to input uncertainties and more effective at suppressing error propagation. These results demonstrate that \textbf{SR’s inherent stability and resilience to input perturbations validate its application in robust data augmentation}.

\subsection{Synthetic-plus-Real vs. Coarse-to-Fine Adaptation}

\begin{figure}[htbp]
    \centering
    \includegraphics[width=0.85\textwidth]{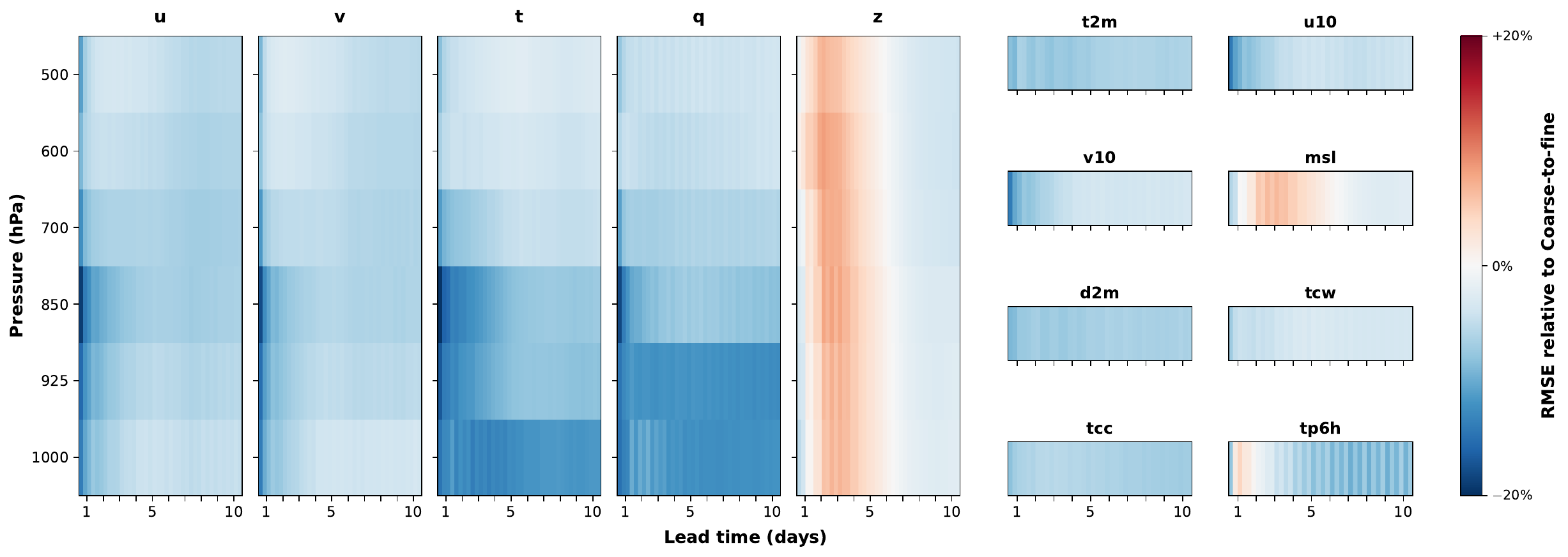}
    \caption{Comparative performance of the ``synthetic-plus-real'' training-from-scratch paradigm versus the ``coarse-to-fine transfer'' approach. The $x$-axis is  the lead time (days), and different colors show different values of relative RMSE difference.}
    \label{fig:comparision_to_coarse_to_fine}
\end{figure}

At the onset of our study, we conduct a comprehensive comparison between two technical paradigms for high-resolution adaptation: the conventional ``coarse-to-fine'' approach and our proposed ``synthetic-plus-real'' strategy. This comparison serves to identify \textbf{whether model-level transfer or data-driven augmentation provides a more effective route for $0.1^\circ$ modeling}.

\subsubsection{Experimental Setup}
Current $0.1^\circ$ resolution models, represented primarily by \cite{han2024fengwu} and \cite{Bodnar2024AuroraAF}, both follow a `coarse-to-fine' technical route. In this work, we choose to implement our scheme by reproducing the methodology established by \cite{han2024fengwu}, which adapts a pre-trained $0.25^\circ$ model to $0.1^\circ$ resolution through decompositional and combinational transfer learning.
In contrast, the ``synthetic-plus-real'' paradigm, as described in Sec.~\ref{section:method}, augments the training set by generating high-resolution synthetic data and trains the $0.1^\circ$ model from scratch. To ensure a fair comparison, both paradigms are evaluated on 2025 real-time analysis data.

\subsubsection{Training Datasets}
\label{sec:dataset}
The ``synthetic-plus-real'' training set integrates two key components:
\begin{itemize}
\item \textbf{Real-Time Analysis Data:} We utilize $0.1^\circ$ ECMWF 4D-Var analysis data (2017--2024). For accumulated variables (fdir, ssrd, and tp) unavailable in the analysis archive, we source them from zero-hour IFS-HRES forecasts.
\item \textbf{Synthetic Data:} To supplement the limited $0.1^\circ$ archive, we apply a per-variable super-resolution (SR) model to $0.25^\circ$ ERA5 reanalysis (2007--2016). This yields 10 years of $0.1^\circ$ pseudo-labels that maintain the physical consistency of the original ERA5 records.
\end{itemize}
The complete dataset encompasses 13 standard pressure levels (50--1000 hPa) for upper-air variables (z, t, u, v, q) and a comprehensive set of surface parameters, including standard meteorological fields (t2m, d2m, u10, v10, msl, sp), specialized variables (u100, v100, lcc, tcc, sst), radiation metrics (ssrd, fdir), and moisture metrics (tcw, tcwv, tp). This dataset serves as the foundation for the experimental results presented in Sec.~\ref{section:exp}.

\subsubsection{Results}
As illustrated in Fig.~\ref{fig:comparision_to_coarse_to_fine}, the synthetic-plus-real paradigm significantly outperforms the coarse-to-fine baseline across the majority of variables. Notably, RMSE reductions reach as much as 20\% (highlighted by the prominent blue regions). These findings provide empirical evidence for \textbf{why data transfer works}: training directly on high-fidelity synthetic data facilitates the learning of more robust spatial representations than parameter-level fine-tuning on interpolated inputs. 
Thus, \textbf{we adopt the synthetic-plus-real paradigm as our core methodology for all subsequent stages}.

\subsection{Theoretical Justification of Synthetic Data Transfer}
\input{sections/3.1_theory}

\section{System Architecture and Training Strategies}
\label{section:method}
\subsection{Framework of \NAME}

As illustrated in Fig.~\ref{fig:arch}, \NAME comprises a per-variable super-resolution generator and a high-resolution forecasting model. To mitigate the scarcity of $0.1^{\circ}$ EC analysis data, we employ the SR generator to synthesize $0.1^{\circ}$ pseudo-labels from $0.25^{\circ}$ ERA5 data. 

\begin{figure}[h]
    \centering
    \includegraphics[width=0.9\textwidth]{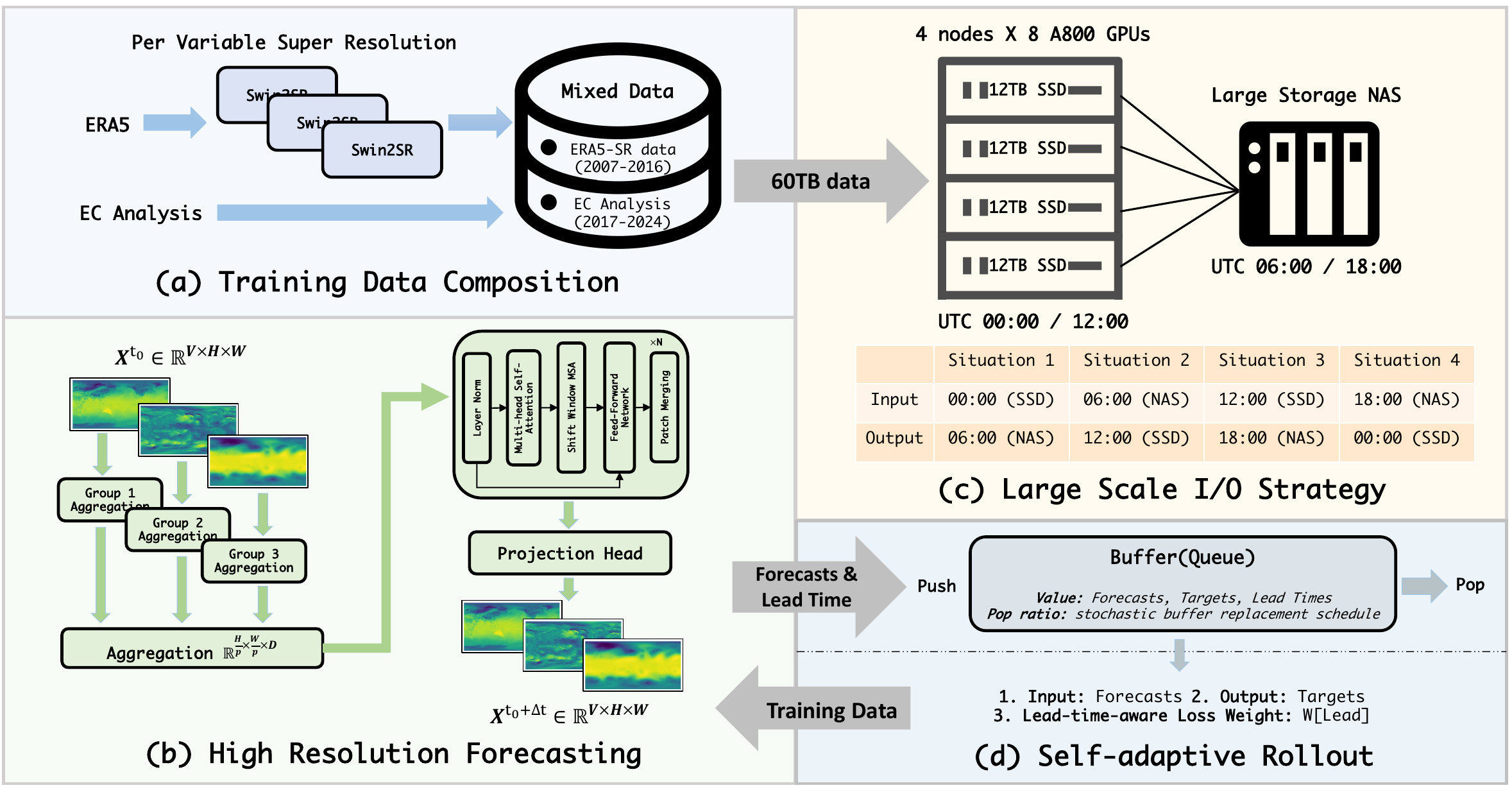}
    \caption{Overview of \NAME's architecture. (a) Training Data Composition. (b) High-resolution forecasting model. (c) Large-scale I/O strategy. (d) Self-adaptive rollout strategy.}
    \label{fig:arch}
\end{figure}

\textbf{Per-Variable Super-Resolution}
Unlike weather forecasting, which is formulated as a physics-constrained initial value problem, super-resolution emphasizes topographic-guided spatial reconstruction. Given that variable-specific training yields higher fidelity~\cite{vandal2017deepsd, harris2022generative, addison2022machine}, we adopt Swin2SR~\cite{conde2022swin2sr} to downscale $0.25^{\circ}$ ERA5 data, treating each atmospheric variable independently. Each SR model is trained on paired 0.25$^\circ$--0.1$^\circ$ fields using an MAE reconstruction objective.

\textbf{High-Resolution Forecasting Model}
As illustrated in Fig.~\ref{fig:arch}(b), the forecasting model processes inputs through variable-specific tokenization and hierarchical weather embedding, followed by multiple Swin Transformer blocks~\cite{liu2021Swin}, and a linear projection head for spatial restoration.



\paragraph{Variable-Specific Tokenization and Hierarchical Embedding.}
Given an input of shape $V \times H \times W \times D$, where $V$ is 80 (tp solely as an output), $H$ is 1801 and $W$ is 3600. Then each variable is independently tokenized via patchification with a patch size $p$, resulting in a reshaped tensor of $V \times h \times w \times D$ (where $h = H/p$ and $w = W/p$). However, the ``brute-force'' cross-attention aggregation common in prior works~\cite{Nguyen2023ClimaXAF, stomer, niu2025utilizing} becomes computationally prohibitive for such high-resolution atmospheric data. To mitigate this, we propose a \textbf{hierarchical weather embedding}. By partitioning variables into 12 groups, our model first distills them into group-specific queries and then fuses them into a global summary. This two-level aggregation strategy effectively minimizes GPU memory overhead while streamlining the training process.

\paragraph{Swin Transformer Blocks and Projection Head.}
The embeddings are processed by Swin Transformer blocks to facilitate patch interactions. Finally, a linear projection head ($\mathbb{R}^{D\times Vp^2}$) restores the representation to its original spatial dimensions.

\subsection{Self-Adaptive Rollout Strategy}

To mitigate error accumulation in auto-regressive forecasting, recent models~\cite{han2024fengwu, Bodnar2024AuroraAF} have adopted RL-inspired replay buffers for long-range stability. However, over-emphasizing long-lead horizons often degrades short-term accuracy due to a forgetting effect. To resolve this, we introduce two strategies for optimized replay buffer utilization that balance long-range stability with short-term fidelity.

\begin{itemize}
    \item \textbf{Lead-Time-Aware Loss Weighting:} We employ lead-time-dependent wei\-ghts to ensure short-lead predictions retain sufficient gradient influence. This prevents long-lead cumulative errors from dominating the optimization, thereby preserving foundational accuracy and mitigating error propagation during auto-regression.

    \item \textbf{Stochastic Buffer Replacement} To enhance data diversity, we implement a randomized buffer update strategy rather than a standard FIFO approach. By maintaining a heterogeneous mixture of lead times per batch, this method stabilizes distributed training and mitigates the instability typically caused by small local batch sizes.
\end{itemize}

\subsection{Training and I/O Optimization}

We also implement a scalable training framework for \NAME, leveraging advanced I/O partitioning and multi-stage optimization to ensure high-fidelity forecasting and high-throughput training.

\begin{itemize}
    \item \textbf{Training Pipeline} \NAME utilizes 32 NVIDIA A800 GPUs in three stages: (1) \textbf{Pretraining}, for 250k steps (14 days) on 8-year EC analysis data; (2) \textbf{Synthetic Fine-tuning}, for 70k steps (4 days) incorporating 10-year super-resolution data; and (3) \textbf{Rollout Training}, for 2 days with a self-adaptive replay buffer to suppress long-range error accumulation.

    \item \textbf{I/O Strategies} To address the I/O bottleneck caused by 32-bit data files of 4--5 GB each, we \textit{achieve a $4\times$ speedup} via: (1) \textbf{Data Partitioning}, sharding data across local 12 TB SSDs per node; (2) \textbf{Hybrid Storage}, caching UTC 00:00/12:00 data on SSDs while keeping 06:00/18:00 data on NAS.
\end{itemize}

\noindent Overall, the training pipeline takes about 20 days on 32 NVIDIA A800 GPUs and uses about 60 TB of storage for 18 years of 0.1$^\circ$ synthetic-plus-real data;
detailed compute and I/O costs are provided in Supplementary Sec.~C.4.

%% file: sections/3.1_theory.tex
\def \E {\mathrm{E}}
\def \x {\mathbf{x}}
\def \g {\mathbf{g}}
\def \L {\mathcal{L}}
\def \D {\mathcal{D}}
\def \z {\mathbf{z}}
\def \u {\mathbf{u}}
\def \H {\mathcal{H}}
\def \w {\mathbf{w}}
\def \R {\mathbb{R}}
\def \S {\mathcal{S}}
\def \regret {\mbox{regret}}
\def \Uh {\widehat{U}}
\def \Q {\mathcal{Q}}
\def \W {\mathcal{W}}
\def \N {\mathcal{N}}
\def \A {\mathcal{A}}
\def \q {\mathbf{q}}
\def \v {\mathbf{v}}
\def \M {\mathcal{M}}
\def \c {\mathbf{c}}
\def \ph {\widehat{p}}
\def \d {\mathbf{d}}
\def \p {\mathbf{p}}
\def \q {\mathbf{q}}
\def \db {\bar{\d}}
\def \dbb {\bar{d}}
\def \I {\mathcal{I}}
\def \f {\mathbf{f}}
\def \a {\mathbf{a}}
\def \b {\mathbf{b}}
\def \ft {\widetilde{\f}}
\def \bt {\widetilde{\b}}
\def \h {\mathbf{h}}
\def \B {\mathbf{B}}
\def \bts {\widetilde{b}}
\def \fts {\widetilde{f}}
\def \Gh {\widehat{G}}
\def \bh {\widehat{b}}
\def \fh {\widehat{f}}
\def \vb {\bar{v}}
\def \zt {\widetilde{\z}}
\def \zts {\widetilde{z}}
\def \s {\mathbf{s}}
\def \gh {\widehat{\g}}
\def \vh {\widehat{\v}}
\def \Sh {\widehat{S}}
\def \rhoh {\widehat{\rho}}
\def \hh {\widehat{\h}}
\def \C {\mathcal{C}}
\def \V {\mathcal{V}}
\def \t {\mathbf{t}}
\def \xh {\widehat{\x}}
\def \Ut {\widetilde{U}}
\def \wt {\widetilde{w}}
\def \Th {\widehat{T}}
\def \Ot {\tilde{\mathcal{O}}}
\def \X {\mathcal{X}}
\def \nb {\widehat{\nabla}}
\def \K {\mathcal{K}}
\def \P {\mathbb{P}}
\def \T {\mathcal{T}}
\def \F {\mathcal{F}}
\def \ft{\widetilde{f}}
\def \xt {\widetilde{x}}
\def \Rt {\mathcal{R}}
\def \V {\mathcal{V}}
\def \Rb {\bar{\Rt}}
\def \wb {\bar{\w}}
\def \fh {\widehat{f}}
\def \wh {\widehat{\w}}
\def \lh {\widehat{\lambda}}
\def \e {\mathbf{e}}
\def \B {\mathcal{B}}
\def \P {\mathcal{P}}
\def \vb {\bar{v}}
\def \ub {\bar{u}}
\def \Rt {\mathcal{R}}
\def \wh {\widehat{w}}
\def \Lh {\widehat{\L}}
\def \rh {\widehat{r}}
\def \G {\mathcal{G}}
\def \qh {\widehat{q}}
\def \Qh {\widehat{Q}}
\def \Gh {\widehat{G}}
\def \Ph {\widehat{P}}
\def \U {\mathcal{U}}
\def \diag {\text{diag}}
\def \Kh {\widehat{K}}
\def \ut {\tilde{u}}
\def \gb {\bar{\gamma}}
\def \gh {\widehat{g}}
\def \Vh {\widehat{V}}
\def \zh {\widehat{z}}
\def \zb {\bar{z}}
\def \vb {\bar{v}}
\def \pt {\widetilde{p}}
\def \Hh {\widehat{H}}
\def \sgn {\mbox{sgn}}
\def \eh {\hat{\epsilon}}
\def \Sh {\widehat{\Sigma}}
\def \St {\widetilde{\Sigma}}
\def \Vt {\widetilde{V}}
\def \tr {\mbox{tr}}
\def \yh {\widehat{y}}
\def \yt {\widetilde{y}}
\def \rh {\widehat{r}}
\def \Lt {\widetilde{\L}}
\def \Zt {\widetilde{Z}}
\def \muh {\widehat{\mu}}

\renewcommand{\algorithmicrequire}{ \textbf{Input:}} 
\renewcommand{\algorithmicensure}{ \textbf{Output:}} 

While including clean data can mitigate \textit{model collapse} in iterative learning \cite{shumailov-2024-model-collapse, dey-2024-universality}, the consensus remains that synthetic data generally impairs generalization \cite{amin-2026-limitation-of-ERM}. We challenge this view by demonstrating that an appropriate synthetic generation procedure can significantly reduce generalization error in linear regression. 


Consider a multi-cluster distribution where samples $(x, y)$ are generated via $x = u \otimes e_k$ with $u \sim \mathcal{N}(\mu_k, \sigma^2 I)$ and $y = \langle w, x \rangle + z$ with noise $z \sim \mathcal{N}(0, \gamma^2)$. Let $D_a = \{(x_i, y_i)\}_{i=1}^n$ denote the training set. Under assumptions on $\mu_k$ (normalization, orthogonality, and separation) and $\sigma = o(1)$, we obtain an estimation of $w$(denoted as $\wh$) using the standard linear regression model: 
\begin{eqnarray}
\wh = \left(\frac{1}{n}\sum_{i=1}^n x_ix_i^{\top}\right)^{-1}\left(\frac{1}{n}\sum_{i=1}^n x_i y_i\right).  \label{eqn:wh}
\end{eqnarray}
The following theorem shows the estimation error for $\wh$ in Eq.~\eqref{eqn:wh}.

\begin{thm}
With a probability $\geq$ $1 - \delta$, we have
\begin{eqnarray}
\left|w - \wh\right| \leq O\left(\gamma\sqrt{\frac{dm}{n}\log\frac{1}{\delta}}\right).
\end{eqnarray}
\end{thm}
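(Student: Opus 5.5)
Since $y_i = \langle w, x_i\rangle + z_i$, substituting into Eq.~\eqref{eqn:wh} gives the exact error decomposition
\begin{eqnarray*}
\wh - w = \Sh^{-1}\left(\frac{1}{n}\sum_{i=1}^n x_i z_i\right), \qquad \Sh = \frac{1}{n}\sum_{i=1}^n x_i x_i^{\top}.
\end{eqnarray*}
The bound follows by controlling the smallest eigenvalue of $\Sh$ from below and the noise term $\frac1n\sum_i x_i z_i$ from above. Throughout, $d$ is the dimension of $u$ and $m$ is the number of clusters, so $x\in\R^{dm}$.

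\textbf{Structure of $\Sh$.} Because $x = u\otimes e_k$, the matrix $\Sh$ is block diagonal with $m$ blocks of size $d\times d$. The $k$-th block is $\frac{1}{n}\sum_{i: k_i = k} u_iu_i^{\top}$. Its expectation is $p_k(\mu_k\mu_k^{\top} + \sigma^2 I)$, where $p_k$ is the mixing weight of cluster $k$. Inverting $\Sh$ therefore reduces to inverting each block separately. Two ingredients are needed here:
\begin{itemize}
\item a multiplicative Chernoff bound, which guarantees $n_k \geq p_k n/2$ samples in every cluster with probability $1-\delta/3$ (assuming $p_k = \Theta(1/m)$);
\item a matrix Bernstein (or Wishart) concentration bound, showing that each empirical block is within a constant factor of its mean whenever $n_k \gtrsim d\log(dm/\delta)$.
\end{itemize}

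\textbf{Noise term.} Conditioned on the $x_i$, the vector $\Sh^{-1}\frac{1}{n}\sum_i x_i z_i$ is Gaussian with covariance $\frac{\gamma^2}{n}\Sh^{-1}$. Hence
\begin{eqnarray*}
|\wh - w|^2 \leq \frac{\gamma^2}{n}\left(\tr(\Sh^{-1}) + O\left(\|\Sh^{-1}\|\log\frac{1}{\delta}\right)\right),
\end{eqnarray*}
by Gaussian norm concentration (Laurent--Massart). The goal is then to show that $\tr(\Sh^{-1})$ scales like $dm$ up to constants and that $\|\Sh^{-1}\| = O(m)$. Combining these gives the stated rate $\gamma\sqrt{\frac{dm}{n}\log\frac1\delta}$, with the logarithmic factor absorbed into the product form.

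\textbf{Main obstacle.} The hardest step is the eigenvalue lower bound. With $\sigma = o(1)$, each block is nearly rank one: it has eigenvalue about $p_k(1+\sigma^2)$ along $\mu_k$, using the normalization $|\mu_k|=1$, but only about $p_k\sigma^2$ in the orthogonal directions. A naive bound would therefore introduce a factor $1/\sigma^2$ into the error. To handle this, I would use the orthogonality and separation assumptions on the $\mu_k$, interpreting the hidden constant in $O(\cdot)$ as depending on these assumptions. I would try first to normalize the design, that is, to measure the error in the $\E[xx^\top]$-norm, where the directional degeneracy cancels exactly and yields $\tr(\Sh^{-1}\E[xx^\top]) \approx dm$. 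If the plain Euclidean norm $|w-\wh|$ is required, I would state explicitly that the constant absorbs $\sigma^{-2}$, or that $\sigma$ is bounded below relative to $n$. I would then apply the block-wise concentration for $\Sh$ relative to its mean, which holds uniformly in $\sigma$ once $n_k \gtrsim d\log(dm/\delta)$. A final union bound over the three events finishes the argument.
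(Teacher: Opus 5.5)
The paper gives no proof here (it calls the bound standard), and your skeleton is that standard argument: the exact decomposition $\widehat{w}-w=\widehat{\Sigma}^{-1}\frac{1}{n}\sum_i x_i z_i$, Chernoff for the cluster counts, block-wise concentration of $\widehat{\Sigma}$ (uniform in $\sigma$, as you say), then conditional Gaussianity plus Laurent--Massart. You are right that the Euclidean norm is the sticking point. However, your Euclidean branch rests on a false target and understates the loss.

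With the paper's normalization $|\mu_k|=\sqrt{d}$ (not $1$) and $p_k=1/m$, the design $\Sigma=\mathbb{E}[xx^\top]$ has blocks $\frac{1}{m}(\mu_k\mu_k^\top+\sigma^2 I_d)$. Hence $\lVert\Sigma^{-1}\rVert=m/\sigma^2$ and $\mathrm{tr}(\Sigma^{-1})=\frac{m^2}{d+\sigma^2}+\frac{m^2(d-1)}{\sigma^2}$. Your goal $\mathrm{tr}(\widehat{\Sigma}^{-1})\asymp dm$ fails even if you ignore $\sigma$. Since $\mathrm{tr}(\widehat{\Sigma})=\frac{1}{n}\sum_i|u_i|^2\approx d(1+\sigma^2)$, the AM--HM inequality forces $\mathrm{tr}(\widehat{\Sigma}^{-1})\ge (dm)^2/\mathrm{tr}(\widehat{\Sigma})=\Omega(dm^2)$.

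Moreover, $\widehat{w}-w$ is exactly $\mathcal{N}(0,\frac{\gamma^2}{n}\widehat{\Sigma}^{-1})$ given the $x_i$. So on your good event $\widehat{\Sigma}\preceq C\Sigma$, Paley--Zygmund applied to a $\chi^2_{m(d-1)}$ variable gives $|\widehat{w}-w|^2\gtrsim\frac{\gamma^2 m^2(d-1)}{n\sigma^2}$ with constant probability. This is a genuine lower bound, not an artifact of a naive estimate. In the Euclidean norm the stated rate is off by a factor $\sqrt{m}/\sigma$, which diverges under $\sigma=o(1)$. So the statement as literally written cannot be proved by any argument, and the three fixes you float do not work:
\begin{itemize}
\item ``The constant absorbs $\sigma^{-2}$'' would have to be $m\sigma^{-2}$.
\item A lower bound on $\sigma$ relative to $n$ cannot help, because the ratio of the true error to the claimed rate does not depend on $n$.
\item Orthogonality or separation of the $\mu_k$ is irrelevant. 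The one-hot encoding makes $\widehat{\Sigma}$ exactly block diagonal, so cross-cluster geometry never enters its spectrum; those assumptions matter only for cluster recovery in Theorem 3.
\end{itemize}

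The branch that works is your $\mathbb{E}[xx^\top]$-norm reading, and it should be the main line rather than a fallback. It is precisely the form the paper itself uses after Theorem 2, where it bounds $\mathbb{E}[|\langle x_i,\widehat{w}-w\rangle|^2]$ by $O\bigl(\frac{\gamma^2 dm}{n}\log\frac{1}{\delta}\bigr)$. To close it:
\begin{itemize}
\item Whiten block $k$ by $(\mu_k\mu_k^\top+\sigma^2 I_d)^{-1/2}$. This gives vectors with identity second moment and a sub-Gaussian constant independent of $\sigma$.
\item Hence $c\Sigma\preceq\widehat{\Sigma}\preceq C\Sigma$ once every $n_k\gtrsim d+\log\frac{m}{\delta}$. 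Chernoff guarantees this when $n\gtrsim m(d+\log\frac{m}{\delta})$. The statement omits this sample-size condition but needs it, since otherwise $\widehat{\Sigma}$ is singular.
\item Then $\Sigma^{1/2}(\widehat{w}-w)$ is conditionally Gaussian with covariance $\frac{\gamma^2}{n}\Sigma^{1/2}\widehat{\Sigma}^{-1}\Sigma^{1/2}$, whose eigenvalues lie in $[\frac{\gamma^2}{Cn},\frac{\gamma^2}{cn}]$.
\item Laurent--Massart yields $(\widehat{w}-w)^\top\Sigma(\widehat{w}-w)\le\frac{\gamma^2}{n}\,O\bigl(dm+\log\frac{1}{\delta}\bigr)$.
\end{itemize}
This is the claimed rate, in the prediction norm.
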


Suppose that we have $N$ additional noisy examples $D_b = \{(u_i, y_i)\}_{i=1}^N$ ($N \gg n$), where each $u_i$ is generated via the same process but lacks its cluster index $k_i$. Because this key information is latent, we refer to $D_b$ as noisy data. The following theorem indicates that a linear regression model learned directly from $D_b$ will suffer a substantially larger error compared to $\hat{w}$ defined in Eq.~\eqref{eqn:wh}.
\begin{thm}
Let $w_u \in \R^d$ be the linear regression model we learned from $\D_b$. Assume $N = +\infty$. We have
\begin{eqnarray}
\E_{k \in [m]}\E_{u \sim \N(\mu_k, \sigma^2 I_d)} \left[\left|\langle u, w_u\rangle - \langle u, w_k\rangle\right|^2\right] \geq \sigma^2\mbox{VAR}\left(w_1, \ldots, w_m\right),
\end{eqnarray}
where
\begin{eqnarray}
\mbox{VAR}\left(w_1, \ldots, w_m\right) = \E_{k \in [m]}\left[\left|w_k - \E_j\left[w_j\right]\right|^2\right].
\end{eqnarray}
\end{thm}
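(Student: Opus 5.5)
The plan is to observe that the claimed bound holds for \emph{any} fixed vector $w_u \in \R^d$ that does not depend on the cluster index. So I will not need the explicit form of the population least-squares solution on $\D_b$. The assumption $N=+\infty$ is used only to guarantee that $w_u$ is a deterministic vector: the population minimizer of $\E[(y-\langle u, w_u\rangle)^2]$ over the mixture, with no sampling fluctuation.

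First I will fix the notation that links the two settings. I write the parameter of the clustered model as $w=(w_1,\ldots,w_m)$ with $w_k\in\R^d$. Since $x=u\otimes e_k$, we have $\langle w, x\rangle=\langle w_k, u\rangle$, so a sample from cluster $k$ satisfies $y=\langle w_k,u\rangle+z$. The quantity to bound is therefore the excess prediction error of the single model $w_u$ against the cluster-specific optimum $w_k$.

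Next, for each fixed $k$ I compute the inner expectation in closed form using the second moment of $u\sim\N(\mu_k,\sigma^2 I_d)$. Setting $\Delta_k = w_u - w_k$,
\begin{eqnarray*}
\E_{u}\left[\langle u,\Delta_k\rangle^2\right]=\Delta_k^{\top}\left(\mu_k\mu_k^{\top}+\sigma^2 I_d\right)\Delta_k=\langle \mu_k,\Delta_k\rangle^2+\sigma^2|\Delta_k|^2\geq \sigma^2|w_u-w_k|^2 .
\end{eqnarray*}
Averaging over $k\in[m]$ then gives $\sigma^2\,\E_k|w_u-w_k|^2$ as a lower bound.

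Finally, I use the bias--variance identity $\E_k|w_u-w_k|^2 = |w_u-\bar w|^2 + \E_k|w_k-\bar w|^2$, where $\bar w=\E_j[w_j]$. This holds because the cross term $2\langle w_u-\bar w, \E_k[\bar w - w_k]\rangle$ vanishes. Dropping the nonnegative term $|w_u-\bar w|^2$ yields $\sigma^2\,\mbox{VAR}(w_1,\ldots,w_m)$, which is the claim.

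There is no real obstacle in this argument. The only step needing care is the justification that $w_u$ is independent of $k$ and $u$ at evaluation time, so that the Gaussian second-moment computation applies conditionally on $k$. This is exactly what $N=+\infty$ provides. With finite $N$, one would instead condition on $\D_b$, which is independent of the test point, and the same argument goes through.

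I would also remark that the discarded term $\langle\mu_k,\Delta_k\rangle^2$ can be large, so the bound is conservative. Under the orthogonality and separation assumptions on $\mu_k$, that term would generally make the gap even larger; this strengthens the contrast with the $O(\gamma\sqrt{dm/n})$ rate of the first theorem.
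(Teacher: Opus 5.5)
Your argument is correct and is essentially the paper's proof: both expand $\mathrm{E}_u\langle u, w_u-w_k\rangle^2=\sigma^2|w_u-w_k|^2+\langle\mu_k,w_u-w_k\rangle^2$, drop the mean term, and then use that $\bar w=\mathrm{E}_j[w_j]$ minimizes $\mathrm{E}_k|v-w_k|^2$; you justify this last step explicitly through the bias--variance identity, whereas the paper only asserts it. The one difference is that the paper first writes a closed form for $w_u$ that the bound never uses, and that closed form omits the $\sigma^2\bar w$ part of $\mathrm{E}[uy]$; your observation that the inequality holds for every $k$-independent $w_u\in\mathbb{R}^d$ makes that computation, and assumption (b), unnecessary.
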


Compared to $\wh$ in Eq.~\eqref{eqn:wh}, we will have $\wh$ yields a smaller regression error than $w_u$, even when $N \rightarrow +\infty$.
Our second approach is to learn from $\D_a$ a model to predict the index of Gaussian distribution, and apply the learned prediction model to augment the input patterns with the predicted index.
The following theorem bounds the recovery error for $\wt$.

\begin{thm}
Assume $n \geq md\log\frac{N}{\delta}$.
Then, with a probability at least $1 - \delta$, we have
\begin{eqnarray}
\left|\wt - w\right| \leq O\left(\sqrt{\frac{dm}{N}\log\frac{1}{\delta}}\right).
\end{eqnarray}
\end{thm}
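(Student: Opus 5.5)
The plan is to reduce the third theorem to the first one, applied with sample size $N$ in place of $n$, by showing that the learned cluster-index predictor makes no mistakes on $D_b$ with high probability. Here $\wt$ is the least-squares estimator of Eq.~\eqref{eqn:wh} computed on the augmented set $\widetilde{D}_b=\{(\u_i\otimes e_{\hat k_i}, y_i)\}_{i=1}^N$, where $\hat k_i$ is the predicted index. On the event $E=\{\hat k_i=k_i\ \forall i\le N\}$, $\widetilde{D}_b$ has exactly the distribution of $N$ clean samples from the multi-cluster model. Applying the first theorem with failure probability $\delta/2$ then gives $|\wt-w|\le O(\gamma\sqrt{(dm/N)\log(1/\delta)})$; absorbing $\gamma$ into the constant yields the stated bound. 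So it remains to show $\Pr(E^c)\le\delta/2$ under $n\ge md\log(N/\delta)$.

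\textbf{Step 1: the classifier.} I take a nearest-centroid rule. From $D_a$ we know each $k_i$, since $x_i=u_i\otimes e_{k_i}$ reveals its block. Set $\muh_k$ to the average of the $u_i$ with $k_i=k$, and predict $\hat k(u)=\arg\max_k\langle u,\muh_k\rangle$, or equivalently the nearest $\muh_k$, using the normalization $|\mu_k|=1$. Assuming the clusters are roughly balanced, a Chernoff bound gives each cluster $n_k\gtrsim n/m\ge d\log(N/\delta)$ samples with probability $1-\delta/8$. Gaussian concentration then gives $|\muh_k-\mu_k|\le O(\sigma\sqrt{(d+\log(m/\delta))/n_k})=O(\sigma)$ for all $k$ simultaneously.

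\textbf{Step 2: no misclassification on $D_b$.} For a fresh $u=\mu_k+\sigma g$, orthogonality and normalization of the $\mu_j$ give
\[
\langle u,\muh_k\rangle-\langle u,\muh_j\rangle \ge 1 - O(|\muh-\mu|) - \sigma\,|\langle g,\muh_k-\muh_j\rangle|.
\]
Conditioned on $D_a$, the inner product $\langle g,\muh_k-\muh_j\rangle$ is Gaussian with variance at most $2+o(1)$. Since $\sigma=o(1)$, and with the separation assumption, taking $\sigma\sqrt{\log(Nm/\delta)}$ small makes the margin positive, with failure probability $\le\delta/(8Nm)$ per pair. A union bound over $N$ points and $m$ competitors gives $\Pr(E^c)\le\delta/4$. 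The requirement $n\ge md\log(N/\delta)$ enters only through Step~1, making the estimated centroids accurate enough that this margin argument is uniform over $N$ points. The $\log N$ is the price of the union bound.

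\textbf{Main obstacle.} The delicate point is Step~2: reconciling $\sigma=o(1)$ with the $\sqrt{\log(N/\delta)}$ tail needed for the union bound over $N\gg n$ points. If the paper's separation assumption is not strong enough to make $\sigma\sqrt{\log(N/\delta)}$ small, I would fall back to a perturbation argument. That argument bounds the contribution of the few misclassified rows to both $\frac1N\sum\tilde x\tilde x^\top$ and $\frac1N\sum\tilde x y$, and shows it is lower order than $\sqrt{dm/N}$. A secondary technical point is that $\hat k$ depends on $D_a$. Since $D_b$ is independent of $D_a$, I would condition on $D_a$ throughout, so that the first theorem applies verbatim on $E$.
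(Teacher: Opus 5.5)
Your architecture matches the paper's. You estimate the centroids $\widehat{\mu}_k$ from $D_a$, whose block structure reveals each $k_i$. You label every $u_i\in D_b$ by nearest centroid and show that all $N$ labels are correct with high probability. Then $\widetilde{w}$ coincides with the clean least-squares estimator on $N$ samples, and the Theorem~1 analysis applies.

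One small point: conditioning on $E$ changes the law of the $u_i$, so $\widetilde{D}_b$ does not have ``exactly the distribution'' of clean data. Instead, compare $\widetilde{w}$ with the oracle estimator built from the true $k_i$, which equals it on $E$, and add $\Pr(E^c)$ to the oracle's failure probability; the paper does this implicitly.

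Where you differ is the no-misclassification step, and there Step~2 uses hypotheses the paper does not make. The paper assumes $|\mu_k|=\sqrt d$, $\frac1m\sum_k\mu_k\mu_k^{\top}=I$ and $\min_{j\neq k}|\mu_k-\mu_j|=\Omega(\sqrt d)$. The middle condition is isotropy, not pairwise orthogonality, which is impossible for $m>d$. Moreover $|\mu_k|=1$ contradicts it unless $d=1$ (take traces). So the margin must come from equal norms plus separation: $\langle\mu_k,\mu_k-\mu_j\rangle=\tfrac12|\mu_k-\mu_j|^2=\Omega(d)$, or $|\mu_k-\mu_j|^2$ directly for the nearest-centroid rule. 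This is set against projected noise $\sigma\langle g,\widehat{\mu}_k-\widehat{\mu}_j\rangle$ with standard deviation $O(\sigma\sqrt d)$.

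With this scaling, your union bound over $Nm$ pairs needs only $\sigma^2\log(Nm/\delta)\le c\,d$, not $\sigma^2\log(Nm/\delta)=o(1)$. Most of your ``main obstacle'' is therefore an artifact of the normalization. The residual condition is not implied by $\sigma=o(1)$ alone, but the paper needs it too. The paper bounds the full noise norm $|u_i-\mu_{k_i}|^2\sim\sigma^2\chi^2_d$ uniformly over $i\in[N]$. That uniform tail is $\sigma^2\bigl(d+O(\sqrt{d\log(N/\delta)}+\log(N/\delta))\bigr)$, which the paper absorbs into $O(\sigma^2 d)$ without comment. It then concludes by the triangle inequality against the $\Omega(\sqrt d)$ separation.

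Comparing the two routes:
\begin{itemize}
\item The paper's argument is shorter and never uses independence of $D_b$ from $D_a$. But it genuinely needs $\sigma=o(1)$, because the whole noise norm $\sigma\sqrt d$ competes with the separation.
\item Your projection argument only pits a one-dimensional Gaussian against an $\Omega(d)$ margin. Once normalized correctly, it works for $\sigma$ up to order $\sqrt{d/\log(Nm/\delta)}$.
\end{itemize}

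Your fallback perturbation argument would not remove the residual condition. A misclassified row puts $u_i$ in block $\hat{k}_i$ while $y_i$ depends on $w_{k_i}$. This biases $\widetilde{w}$ by roughly the misclassification rate, and that bias does not shrink with $N$. Keeping it below $\sqrt{dm/N}$ again forces a condition of the form $\sigma^2\log N\lesssim d$.
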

Comparing Theorem 3 to Theorem 1, we can see that the solution estimated from the synthetical dataset $\D_c$ is significantly closer to the oracle $w$ than the one that is learned from the clean dataset $\D_a$. \textbf{This result theoretically validates the feasibility of employing super-resolution models for robust data augmentation}.
Further details and proofs are provided in the supplementary material.

%% file: sections/4_results.tex




\label{section:exp}
\subsection{Dataset \& Metrics}
Detailed data descriptions are provided in Sec.~\ref{sec:dataset}. 
We adopt the standard weather evaluation methodology outlined in WeatherBench~\cite{rasp2024weatherbench2benchmarkgeneration}, focusing on forecasts initialized at 00/12 UTC for the year 2025. 
We use weighted Root Mean Square Error (RMSE) and weighted Anomaly Correlation Coefficient (ACC), which have also been widely used in AI-based weather models~\cite{niu2025utilizing, graphcast, Fuxi_nature, pangu_nature}, as our main metrics.



\subsection{The Scaling of Data Volume}
\label{results:scaling_of_data}

Fig.~\ref{fig:scaling_law} reveals consistent power-law-like improvements in forecast skill as the volume of training data increases. This confirms the pivotal role of data scaling in ML-based weather models. Beyond this overall trend, we observe:

\begin{figure}[htbp]
    \centering
    \includegraphics[width=0.9\textwidth]{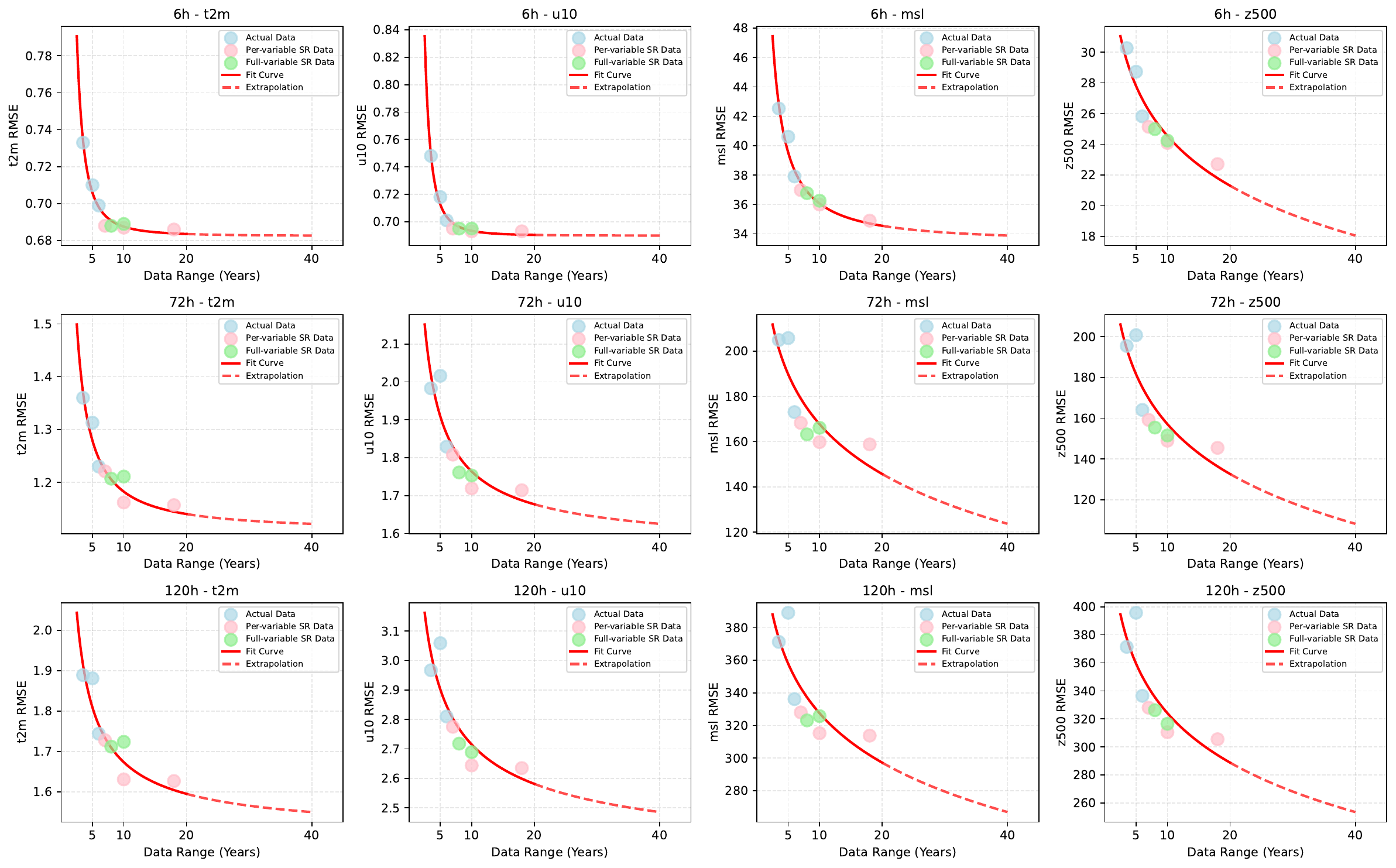}
    \caption{Scaling law curves for different variables and lead times. 
    Different subplots correspond to different variables and forecasting horizons. We plot the $x$-axis in log scale to reflect the scaling law.}
    \label{fig:scaling_law}
\end{figure}

\begin{enumerate}
    \item \textbf{Scaling Consistency Across Lead Times:} 
    The power-law behavior of error reduction remains remarkably stable across different forecast lead times. This consistent scaling slope suggests that increasing data volume is a universally robust strategy for improving both short-term deterministic and long-term atmospheric predictions.
    Notably, for long-term forecasting, RMSE is reduced by \textbf{4.6\%} at 72 hours and \textbf{4.9\%} at 120 hours.

    \item \textbf{Theoretical Saturation and Diminishing Returns:} To provide practical guidance for large-scale training, we conducted an
extrapolation analysis beyond the 18-year data threshold. While the full
47-year ERA5 archive could be leveraged for data augmentation via SR, this
extrapolation should be interpreted as a trend analysis rather than a definitive
performance estimate. The fitted curves suggest that marginal performance gains
may gradually diminish beyond 18 years, indicating a practical trade-off where
further data expansion yields sub-linear improvements relative to the increasing
storage and I/O cost.

    \item \textbf{Variable-Specific Sensitivity to Data Volume:} 
    Sensitivity to data scaling varies significantly by physical field: while most variables (e.g., z500, t2m) plateau early, others like msl exhibit sustained headroom for improvement. This heterogeneity informs our decision to standardize on an 18-year period to optimize overall storage and I/O efficiency.
    
    \item \textbf{Efficacy of Granular Data Augmentation:} 
    Comparative analysis shows that models trained on data generated by per-variable SR outperform those trained on full-variable SR output at a 10-year scale. This advantage is particularly pronounced for surface variables (e.g., t2m), highlighting the value of decoupled strategies in data synthesis.

\end{enumerate}

\subsection{Performance Evaluation and Benchmarking}

To evaluate the model's efficacy, we adopt a dual-perspective approach: first, a holistic global assessment across multiple variables to establish baseline predictive skill, followed by a rigorous investigation into model robustness under diverse extreme scenarios.

\subsubsection{General Comparison of Variables}

To show the superiority of \NAME, we evaluate it against both
physics-based and ML-based baselines. IFS-HRES~\cite{ecmwf_hres} serves as the
operational physics-based 0.1$^\circ$ reference. For ML baselines, we use
Baguan+swin2sr as the primary comparison, where 0.25$^\circ$ Baguan forecasts
are adapted to the 0.1$^\circ$ grid using Swin2SR. Since the first 3-day
forecast is more accurate, directly actionable, and therefore especially
important, we further include GraphCast+swin2sr, Pangu+swin2sr, and Baguan+GHR
for short-term lead times. As Fengwu-GHR~\cite{han2024fengwu} cannot be
directly replicated, Baguan+GHR is re-implemented based on
Baguan~\cite{niu2025utilizing} as a coarse-to-fine high-resolution adaptation
baseline.

\begin{figure}[h]
    \centering
    \includegraphics[width=0.9\textwidth]{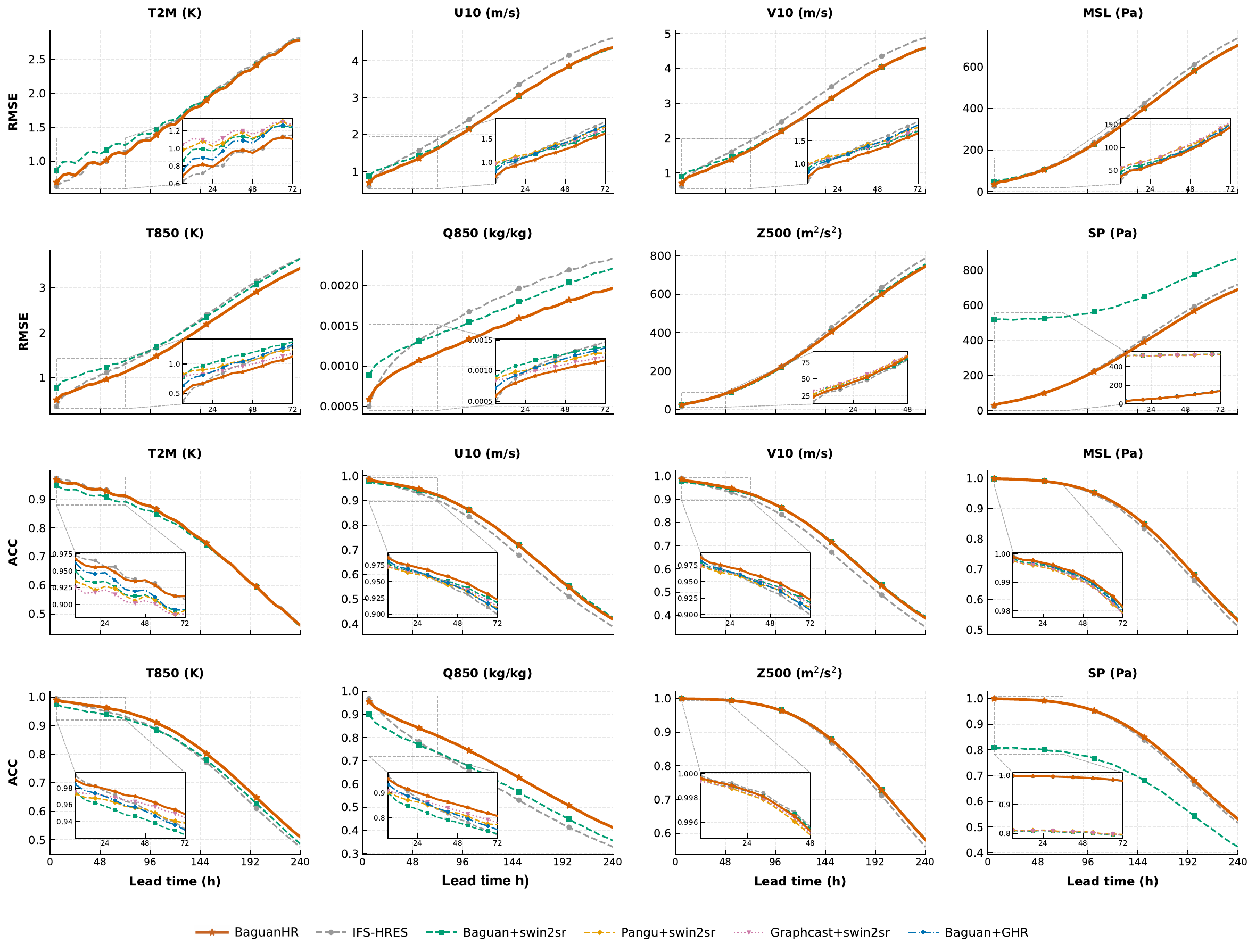}
    \caption{Overall performance of \NAME against several baselines. 
    `Baguan+GHR' denotes the version replicated from~\cite{han2024fengwu}. 
    We zoom in the first 72 hours for better comparison among different methods. }
    \label{fig:main_table}
\end{figure}

\NAME surpasses the baselines (Fig.~\ref{fig:main_table}), improving over \textbf{85\%} of lead times within 72 hours and reducing RMSE by \textbf{4.0\%} compared to IFS-HRES.
Specifically, \NAME achieves an average RMSE reduction of \textbf{5.8\%} at 24 hours and \textbf{9.7\%} at 72 hours relative to the IFS-HRES baseline. Furthermore, \NAME demonstrates a clear performance edge over 0.25° outputs upscaled via Swin2SR~\cite{conde2022swin2sr}, particularly in short-range windows. While auto-regressive systems often suffer from variance loss and converge toward the ensemble mean over time, post-processing techniques are inherently limited by the information bottleneck of coarse-grained inputs. In contrast, \textbf{\NAME's end-to-end high-resolution framework circumvents these limitations by capturing fine-grained dynamics natively}, yielding a more physically consistent and detailed atmospheric representation.

\subsubsection{Predictive Skill in Extreme Scenarios}

To evaluate \NAME's reliability in high-impact scenarios, we assess its performance across three categories of extremes: categorical detection of intense moisture events, predictive accuracy for both relative (percentile-based) and absolute (threshold-based) extremes.

\begin{figure}[htbp]
    \centering
    \includegraphics[width=0.9\textwidth]{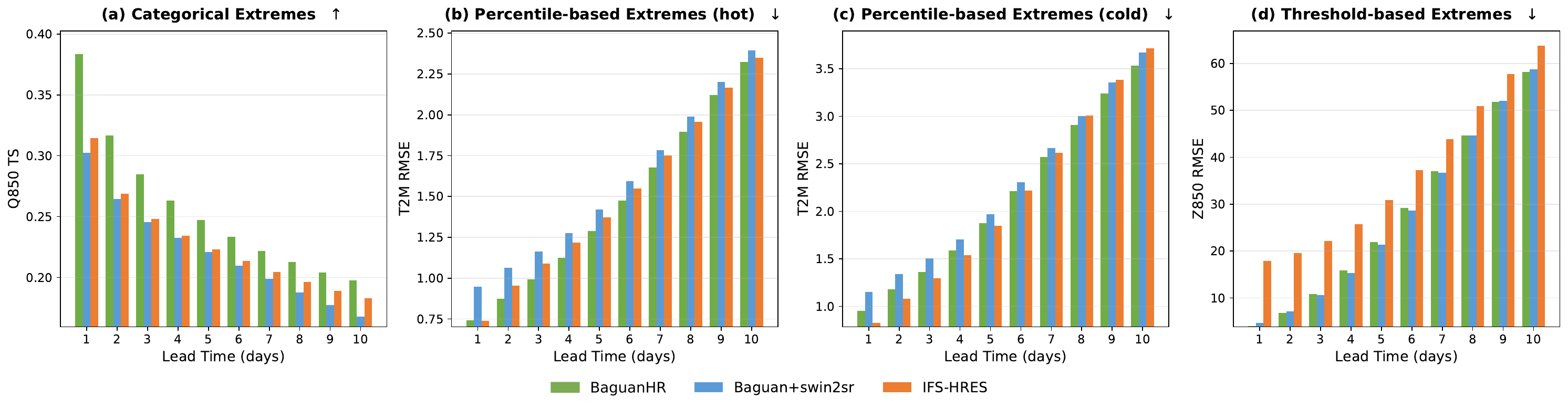}
    \caption{Extreme event verification. Higher TS represents better categorical skill; lower RMSE signifies enhanced accuracy for percentile and threshold-based extremes. 
    }
    \label{fig:extreme_analysis}
\end{figure}

\paragraph{Categorical Extreme Event Detection.}
Using q850 $\geq 14$ g/kg as a thermodynamic proxy for heavy precipitation~\cite{holton2013introduction}, we evaluate detection skill via the Threat Score (TS; Fig.~\ref{fig:extreme_analysis}(a)). \NAME consistently outperforms both Baguan+swin2sr and IFS-HRES, achieving gains exceeding 10\% in most cases. This confirms its superior ability to resolve the intense moisture convergence required for high-impact events.

\paragraph{Percentile-based Relative Extremes.}
To identify statistical anomalies across latitudes, we define extreme heat and cold using the local 90th and 10th percentiles of t2m~\cite{zhang2011indices} (Fig.~\ref{fig:extreme_analysis}(b--c)). \NAME outperforms competitors in more than 70\% of cases, surpassing Baguan+swin2sr by 15\% in the short term and maintaining a 3\% edge in long-term extremes. This highlights its precision in capturing the ``heavy-tail'' thermal distributions often smoothed by lower-resolution models.

\paragraph{Threshold-based Absolute Extremes.}
To evaluate dynamical intensity, we identify mid-latitude low-pressure systems using an absolute threshold (z850 $< 13{,}500$ m$^2$/s$^2$)~\cite{hoskins1985use} (Fig.~\ref{fig:extreme_analysis}(d)). \NAME delivers the best results, outperforming IFS-HRES by over 30\% and maintaining a significant lead over Baguan+swin2sr at both short (1–2 days) and long (9–10 days) lead times. These results demonstrate its capacity to reconstruct the sharp pressure gradients of severe storms.


\subsection{Controlled Comparisons}

We perform controlled comparisons to justify the core components of \NAME across four dimensions: variable modeling (independent vs. joint), physical consistency of SR fields, power spectral analysis, and optimization stability (replay buffer and adaptive weighting).

\textbf{Full-Variable vs. Per-Variable Super-Resolution}
  Fig.~\ref{fig:controlled_comparision} (a) shows the RMSE improvement of per-variable SR
  over full-variable SR across variables. It shows that per-variable SR significantly
  improves surface variables (e.g., 90.7\% for sp, >20\% for t2m and msl) by capturing
  local specificities. Upper-air fields achieve modest improvements (1.6\%--4.7\% for
  temperature and geopotential). This suggests that unlike forecasting, SR favors
  intra-variable textures over cross-variable coupling, making variable-independent
  modeling more effective.
\label{results:per_vs_full}
\begin{figure}[htbp]
    \centering
    \includegraphics[width=0.85\textwidth]{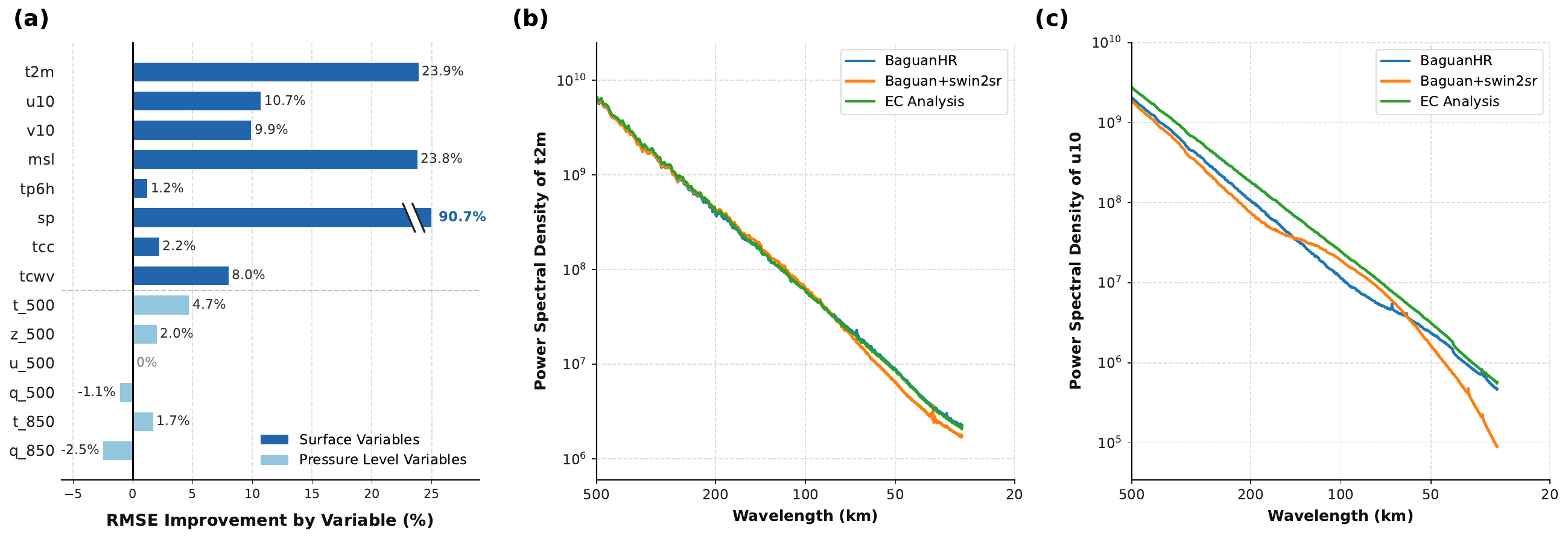}
    \caption{(a) Relative RMSE improvement of per-variable versus full-variable super-resolution. (b-c) Power spectral analysis of t2m and u10.}
    \label{fig:controlled_comparision}
\end{figure}

\textbf{Physical Consistency of SR Fields}
  We diagnose the physical consistency of SR-generated pseudo-labels using three
  diagnostics based on relative humidity (RH), geostrophic wind speed $|\mathbf{V}_g|$, and
  potential temperature differences $\Delta\theta$. Their latitude-weighted correlations
  against the 0.1$^\circ$ analysis remain high across pressure levels (\textbf{RH $\geq
  0.94$, $|\mathbf{V}_g| \geq 0.82$, $\Delta\theta \geq 0.93$}), indicating that
  per-variable SR preserves key moisture, dynamical, and thermodynamic structures,
  alleviating concerns about physically inconsistent high-frequency artifacts. Per-level
  numbers and the power-spectrum analysis are in Supplementary Sec.~E.3.

\textbf{Power Spectral Analysis}
We compute power spectral density (PSD; variance/energy as a function of spatial wavelength) for 6--72h forecasts over the full 2025 validation dataset; EC analysis denotes the corresponding year-mean PSD of analysis data. The results in 
Fig.~\ref{fig:controlled_comparision} (b-c) demonstrate that \NAME maintains higher short-wavelength spectral power, effectively preserving mesoscale structures. Conversely, the Baguan+swin2sr pipeline exhibits systematic energy attenuation, suggesting that post-hoc SR tends to smooth fields rather than recovering fine-scale variability from coarse inputs.


\textbf{Self-Adaptive Rollout Strategy}
  Tab.~\ref{tab:abalation_on_rb} validates our enhanced replay buffer strategy. Loss
  weighting suppresses auto-regressive error propagation, as its removal leads to degraded
  performance on long-range forecasts. Stochastic replacement further boosts performance by
  ensuring lead-time diversity. Together, \NAME harmonizes divergent loss magnitudes
  across horizons, ensuring stable optimization and superior metrics across all variables.
\input{tables/1_replay_buffer}
\subsection{Case Studies}

The aforementioned capabilities of \NAME are further demonstrated through the case studies of tropical cyclone and cold-air outbreak prediction. 

\begin{figure}[htbp]
    \centering
    \includegraphics[width=1\textwidth]{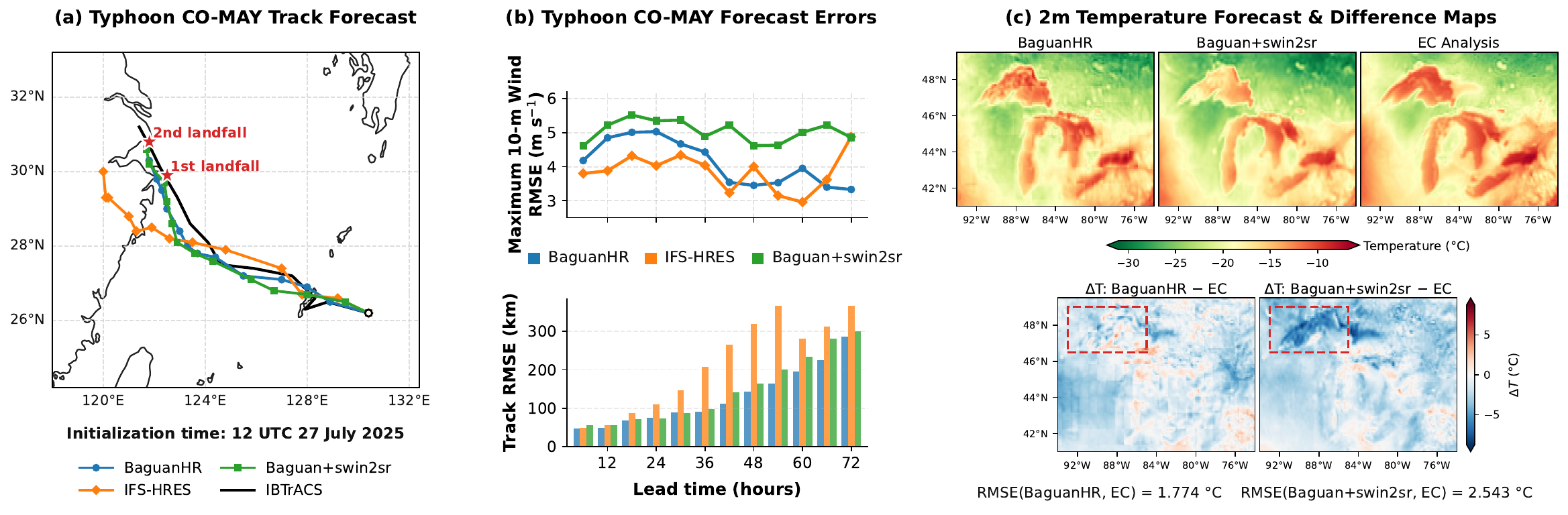}
    \caption{(a) Typhoon CO-MAY tracking forecast, which is initialized at 12 UTC, 27 July 2025. (b) Forecast errors of track and intensity. (c) Case study of 2m temperature forecast over the U.S. Great Lakes region, initialized at 00 UTC, 20 January 2025.} 
    \label{fig:case_study}
\end{figure}

\textbf{Tropical Cyclone Prediction} Tropical cyclones are among Earth's most hazardous weather events. While prior ML methods
  typically lag behind the IFS-HRES in intensity \cite{pangu_nature, Bodnar2024AuroraAF},
  \NAME achieves comparable intensity while maintaining superior track accuracy. For the
  double-landfall cyclone CO-MAY, \NAME precisely identifies two landfall sites,
  outperforming the IFS-HRES track (Fig.~\ref{fig:case_study}(a)). Furthermore, as shown in
  Fig.~\ref{fig:case_study}(b), \NAME substantially outperforms Baguan+swin2sr in both
  intensity and track errors.

\textbf{Cold-Air Outbreak} Fig.~\ref{fig:case_study}(c) presents a 2m temperature (t2m) forecast case study in a cold-air outbreak over the U.S. Great Lakes region. \NAME successfully captures the spatial structure characteristics, e.g., the distinct temperature characteristics over lake surfaces compared to surrounding land areas, a level of physical detail that the post-hoc Baguan+swin2sr baseline fails to resolve, demonstrating its superior performance in extreme value prediction.


%% file: tables/1_replay_buffer.tex
\begin{table}[h]

    \caption{Performance comparison of different strategies on replay buffer.}
    \label{tab:abalation_on_rb}
    
    \centering
    \resizebox{0.9\textwidth}{!}{
        \begin{tabular}{c|cc|cc|cc|cc} 
            \hline 
            
            \multirow{2}{*}{Methods} & 
            \multicolumn{2}{c|}{T2M-72h} & 
            \multicolumn{2}{c|}{U10-72h} & 
            \multicolumn{2}{c|}{MSL-72h} &
            \multicolumn{2}{c}{TCC-72h}\\
            
            & RMSE$\downarrow$ & ACC$\uparrow$ 
            & RMSE$\downarrow$ & ACC$\uparrow$ 
            & RMSE$\downarrow$ & ACC$\uparrow$
            & RMSE$\downarrow$ & ACC$\uparrow$\\
            \hline
            
            w/o Random \& loss weight     & 1.12 & 0.90 & 1.64 & 0.91 & 147.80 & 0.980 & 0.289 & 0.61  \\
            w/o Random & 1.12 & 0.91 & 1.63 & 0.91 & 147.41 & 0.980 & 0.288 & 0.62 \\
            w/o loss weight & 1.11 & 0.91  & 1.63 & 0.92 & 145.32 & 0.981 & 0.289 & 0.62 \\
            \hline 
            
            \NAME   & \textbf{1.09} & \textbf{0.92} & \textbf{1.62} & \textbf{0.92} & \textbf{144.57} & \textbf{0.981} & \textbf{0.287} & \textbf{0.64} \\
            \hline 
            
        \end{tabular}
    }
\end{table}

%% file: sections/5_conclusion.tex
In this paper, we present \NAME, a $0.1^{\circ}$ global weather forecasting model
built on a data-transfer framework that leverages super-resolution-generated synthetic data. We demonstrate fundamental data scaling laws in high-resolution
forecasting, showing that model performance follows a power-law improvement
with training data volume. Using synthetic $0.1^{\circ}$ data, \NAME
achieves superior performance across nearly all variables, especially in the
first 72 hours, outperforming both IFS-HRES and SR-enhanced $0.25^\circ$
models. At 10--15 day lead times, \NAME brings limited gains as forecasts are
increasingly controlled by large-scale atmospheric patterns and fine-scale
details become weakly predictable. Building on Baguan, we are developing the NJU-Earth series as an evolving model family for global weather forecasting, with BaguanHR as its high-resolution component. This work will be extended toward ensemble forecasting and operational applications such as tropical cyclone intensity prediction.

%% file: sections/6_acknowledge.tex
This work was supported by the National Natural Science Foundation of China
(U2342218), the Fundamental and Interdisciplinary Disciplines Breakthrough
Plan of the Ministry of Education of China (JYB2025XDXM907), the DAMO Academy Research Intern Program, and the High Performance Computing Center, Nanjing University (NJU).

%% file: sections/appendix.tex
\section{Theoretical Justification of Synthetic Data Transfer}
\input{sections/app1_theory}

\section{Data and Evaluation Details}
\input{sections/app2_dataset}

\section{Model Architecture and Implementation Details}
\input{sections/app3_model_implementations}

\section{Large-Scale System Engineering and Computational Cost}
\label{app:io_and_cost}
\input{sections/app4_IO_strategy}

\section{Extended Discussions}
\input{sections/app5_extended_discussions}

\section{Limitations and Broader Impact}
\input{sections/app6_limitations}


%% file: sections/app1_theory.tex
\def \E {\mathrm{E}}
\def \x {\mathbf{x}}
\def \g {\mathbf{g}}
\def \L {\mathcal{L}}
\def \D {\mathcal{D}}
\def \z {\mathbf{z}}
\def \u {\mathbf{u}}
\def \H {\mathcal{H}}
\def \w {\mathbf{w}}
\def \R {\mathbb{R}}
\def \S {\mathcal{S}}
\def \regret {\mbox{regret}}
\def \Uh {\widehat{U}}
\def \Q {\mathcal{Q}}
\def \W {\mathcal{W}}
\def \N {\mathcal{N}}
\def \A {\mathcal{A}}
\def \q {\mathbf{q}}
\def \v {\mathbf{v}}
\def \M {\mathcal{M}}
\def \c {\mathbf{c}}
\def \ph {\widehat{p}}
\def \d {\mathbf{d}}
\def \p {\mathbf{p}}
\def \q {\mathbf{q}}
\def \db {\bar{\d}}
\def \dbb {\bar{d}}
\def \I {\mathcal{I}}
\def \f {\mathbf{f}}
\def \a {\mathbf{a}}
\def \b {\mathbf{b}}
\def \ft {\widetilde{\f}}
\def \bt {\widetilde{\b}}
\def \h {\mathbf{h}}
\def \B {\mathbf{B}}
\def \bts {\widetilde{b}}
\def \fts {\widetilde{f}}
\def \Gh {\widehat{G}}
\def \bh {\widehat{b}}
\def \fh {\widehat{f}}
\def \vb {\bar{v}}
\def \zt {\widetilde{\z}}
\def \zts {\widetilde{z}}
\def \s {\mathbf{s}}
\def \gh {\widehat{\g}}
\def \vh {\widehat{\v}}
\def \Sh {\widehat{S}}
\def \rhoh {\widehat{\rho}}
\def \hh {\widehat{\h}}
\def \C {\mathcal{C}}
\def \V {\mathcal{V}}
\def \t {\mathbf{t}}
\def \xh {\widehat{\x}}
\def \Ut {\widetilde{U}}
\def \wt {\widetilde{w}}
\def \Th {\widehat{T}}
\def \Ot {\tilde{\mathcal{O}}}
\def \X {\mathcal{X}}
\def \nb {\widehat{\nabla}}
\def \K {\mathcal{K}}
\def \P {\mathbb{P}}
\def \T {\mathcal{T}}
\def \F {\mathcal{F}}
\def \ft{\widetilde{f}}
\def \xt {\widetilde{x}}
\def \Rt {\mathcal{R}}
\def \V {\mathcal{V}}
\def \Rb {\bar{\Rt}}
\def \wb {\bar{\w}}
\def \fh {\widehat{f}}
\def \wh {\widehat{\w}}
\def \lh {\widehat{\lambda}}
\def \e {\mathbf{e}}
\def \B {\mathcal{B}}
\def \P {\mathcal{P}}
\def \vb {\bar{v}}
\def \ub {\bar{u}}
\def \Rt {\mathcal{R}}
\def \wh {\widehat{w}}
\def \Lh {\widehat{\L}}
\def \rh {\widehat{r}}
\def \G {\mathcal{G}}
\def \qh {\widehat{q}}
\def \Qh {\widehat{Q}}
\def \Gh {\widehat{G}}
\def \Ph {\widehat{P}}
\def \U {\mathcal{U}}
\def \diag {\text{diag}}
\def \Kh {\widehat{K}}
\def \ut {\tilde{u}}
\def \gb {\bar{\gamma}}
\def \gh {\widehat{g}}
\def \Vh {\widehat{V}}
\def \zh {\widehat{z}}
\def \zb {\bar{z}}
\def \vb {\bar{v}}
\def \pt {\widetilde{p}}
\def \Hh {\widehat{H}}
\def \sgn {\mbox{sgn}}
\def \eh {\hat{\epsilon}}
\def \Sh {\widehat{\Sigma}}
\def \St {\widetilde{\Sigma}}
\def \Vt {\widetilde{V}}
\def \tr {\mbox{tr}}
\def \yh {\widehat{y}}
\def \yt {\widetilde{y}}
\def \rh {\widehat{r}}
\def \Lt {\widetilde{\L}}
\def \Zt {\widetilde{Z}}
\def \muh {\widehat{\mu}}

\subsection{Introduction}
People have observed the phenomenon of model collapse when training a LM from iteratively generated data~\cite{shumailov-2024-model-collapse}. Later studies~\cite{dey-2024-universality,gerstgrasser-2024-avoid-model-collapse} showed that model collapse will not take place if we can ensure that the clean data is always included in every iteration of training. However, it is overall a consensus that learning from synthetic data will result in worse generalization error compared to learning from the clean data~\cite{amin-2026-limitation-of-ERM,gerstgrasser-2024-avoid-model-collapse,dey-2024-universality}. In this note, we will show that this is not always the case. We demonstrate that with appropriate procedure of generating synthetic data, we are able to significantly reduce the generalization error for linear regression. We note that our study is different from~\cite{gerstgrasser-2024-avoid-model-collapse} where the introduction of synthetic data is resulted in an increase in variance and therefore a worse regression error. In our study, we focused on the case of synthetic data where a simple model is learned to predict additional features that lead to a better prediction model. 
\subsection{Analysis}
We first describe the data generation process. To create an input pattern $x$, we first choose an index $k \in [m]$ from $1$ to $m$ uniformly at random, and sample $u \in \R^d$ the $k$th Gaussian distribution $\N(\mu_k, \sigma^2 I_d)$. Then, the final input pattern $x \in \R^{md}$ is computed as $x = u\otimes e_k$, where $e_k \in \{0, 1\}^m$ is a one-hot vector with its $k$th element being $1$. Finally, the corresponding output value $y$ is given as $y = \langle u, w_k \rangle + z = \langle w, x\rangle + z$, where $w = (w_1^{\top}, \ldots, w_m^{\top})^{\top} \in \R^{md}$ and $z \sim \N(0, \gamma^2)$. Let $\D_a = \left\{(x_i, y_i), i=1, \ldots, n\right\}$ be the set of training examples that are generated by the above procedure. For the convenience of our analysis, we assume that (a) $|\mu_k| = \sqrt{d}, \forall k \in [m]$, (b) $\frac{1}{m}\sum_{k=1}^m \mu_k\mu_k^{\top} = I$, (c) $\min\limits_{j \neq k} |\mu_k - \mu_j| = \Omega(\sqrt{d})$
and (d) $\sigma = o(1)$.

Using the standard regression model, we obtain an estimation of $w$, denoted as $\wh$, i.e
\begin{eqnarray}
\wh = \left(\frac{1}{n}\sum_{i=1}^n x_ix_i^{\top}\right)^{-1}\left(\frac{1}{n}\sum_{i=1}^n x_i y_i\right) \label{eqn:wh}
\end{eqnarray}
The following theorem shows the estimation error for $\wh$ in (\ref{eqn:wh}).
\begin{thm}
With a probability at least $1 - \delta$, we have
\[
\left|w - \wh\right| \leq O\left(\gamma\sqrt{\frac{dm}{n}\log\frac{1}{\delta}}\right)
\]
\end{thm}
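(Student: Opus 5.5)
The plan is to write the OLS error in closed form, bound its Euclidean norm by a Gaussian quadratic form, and reduce everything to one spectral quantity, the trace of the inverse empirical second-moment matrix. Put $\widehat{\Sigma} = \frac{1}{n}\sum_i x_ix_i^{\top}$. Because $y_i = \langle w, x_i\rangle + z_i$, we get $\widehat{w} - w = \widehat{\Sigma}^{-1}\frac{1}{n}\sum_i x_i z_i$. Conditional on the inputs $\{x_i\}$, this is a centered Gaussian vector in $\R^{md}$ with covariance $\frac{\gamma^2}{n}\widehat{\Sigma}^{-1}$.

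\textbf{Step 1 (noise term).} I will apply Gaussian norm concentration (Laurent--Massart, or Hanson--Wright for $g^{\top}Ag$ with $g$ standard normal). It gives, conditionally on the inputs and with probability at least $1-\delta/2$,
\[
|\widehat{w}-w|^2 \le \frac{\gamma^2}{n}\Bigl(\mathrm{tr}(\widehat{\Sigma}^{-1}) + 2\sqrt{\mathrm{tr}(\widehat{\Sigma}^{-2})\log\tfrac{2}{\delta}} + 2\|\widehat{\Sigma}^{-1}\|\log\tfrac{2}{\delta}\Bigr) \le \frac{3\gamma^2}{n}\,\mathrm{tr}(\widehat{\Sigma}^{-1})\,\log\frac{2}{\delta}.
\]
After this step the theorem is equivalent to $\mathrm{tr}(\widehat{\Sigma}^{-1}) = O(dm)$ on an event of probability at least $1-\delta/2$.

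\textbf{Step 2 (block structure).} Since $x = u\otimes e_k$, the matrix $\widehat{\Sigma}$ is block diagonal with blocks $\frac{n_k}{n}\widehat{\Sigma}_k$. Here $n_k$ is the number of samples in cluster $k$, and $\widehat{\Sigma}_k = \frac{1}{n_k}\sum_{i: k_i = k} u_iu_i^{\top}$ concentrates around $\mu_k\mu_k^{\top} + \sigma^2 I_d$. A multiplicative Chernoff bound with a union bound over $k$ gives $n_k \ge n/(2m)$ for all $k$. Matrix Bernstein (or standard Gaussian covariance concentration) on each block gives $\widehat{\Sigma}_k \succeq \frac12(\mu_k\mu_k^{\top} + \sigma^2 I_d)$ once $n_k \gtrsim d\log(m/\delta)$. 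Together these yield $\mathrm{tr}(\widehat{\Sigma}^{-1}) \le 2\sum_k \frac{n}{n_k}\,\mathrm{tr}\bigl((\mu_k\mu_k^{\top}+\sigma^2 I)^{-1}\bigr)$.

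\textbf{Step 3 (the obstacle).} The inequality I must establish is $\sum_k \frac{n}{n_k}\mathrm{tr}\bigl((\mu_k\mu_k^{\top}+\sigma^2 I)^{-1}\bigr) = O(dm)$. This is where I expect the argument to be hardest, and quite possibly to fail. Direct evaluation gives $\frac{n}{n_k}\approx m$, and the trace is about $\frac{d-1}{\sigma^2} + \frac{1}{d}$. The sum is therefore of order $m^2 d/\sigma^2$, not $dm$. This bound is sharp, because Step 1 is tight up to constants, so the literal Euclidean bound would carry an extra factor $\sqrt{m}/\sigma$.

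My first attempt to close the gap will use assumptions (a)--(c) on the $\mu_k$. I would try to show that the estimator's error concentrates in the directions $\mu_k$, where the block eigenvalue is about $d$, and that the $\sigma^{-2}$ directions are not excited. This could hold only if the $O(\cdot)$ is allowed to absorb a $\sigma$-dependence. The same device cannot remove the $\sqrt{m}$ factor. Each block receives only about $n/m$ samples, so it contributes error of order $\gamma\sqrt{dm/n}$, and summing $m$ such blocks costs a further $\sqrt{m}$.

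If this attempt fails, I will state precisely the version the argument does prove. The bound $O\bigl(\gamma\sqrt{\frac{dm}{n}\log\frac{m}{\delta}}\bigr)$ then holds uniformly for each block error $|w_k - \widehat{w}_k|$, measured in the norm induced by $\mu_k\mu_k^{\top}+\sigma^2 I$. Equivalently, it holds for the prediction-risk norm $\|\widehat{w}-w\|_{\E[xx^{\top}]}$ rescaled by $m$. I will then flag that the unweighted Euclidean statement requires $m = O(1)$ and $\sigma = \Omega(1)$.
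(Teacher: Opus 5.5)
Your diagnosis is correct. The paper gives no proof of this theorem; it only remarks that it is ``standard to obtain the above bound using the concentration inequality''. Your Steps~1--2 are exactly that standard argument, and they show that the statement is false under the paper's own assumptions when $|\cdot|$ is the Euclidean norm on $\mathbb{R}^{md}$, as elsewhere in the appendix (e.g.\ $|\mu_k|=\sqrt{d}$).

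Conditionally on the inputs, $\widehat{w}-w\sim\mathcal{N}\bigl(0,\tfrac{\gamma^2}{n}\widehat{\Sigma}^{-1}\bigr)$ exactly. The $k$-th diagonal block of $\widehat{\Sigma}$ is, in the PSD order, within constant factors of $\tfrac{1}{m}(\mu_k\mu_k^{\top}+\sigma^2 I_d)$. Hence $\widehat{\Sigma}^{-1}$ has $m(d-1)$ eigenvalues of order $m/\sigma^2$. The lower tail of the Gaussian quadratic form then gives $|\widehat{w}-w|\gtrsim\frac{\gamma m}{\sigma}\sqrt{(d-1)/n}$ with probability bounded away from zero, tending to one as $m(d-1)$ grows. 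For $d\ge 2$ this exceeds the claimed rate by a factor of order $\sqrt{m}/\sigma$. That factor is unbounded, since the paper assumes $\sigma=o(1)$, and assumption (b) forces $m\ge d$.

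So the inequality you could not establish in Step~3 is a defect in the statement, not a gap in your reasoning. Drop your ``first attempt'' there, though. The conditional law from Step~1 puts almost all of its variance in the $\sigma^{-2}$ directions orthogonal to the $\mu_k$, so those directions are necessarily excited. Letting $O(\cdot)$ absorb $1/\sigma$ is also incompatible with $\sigma=o(1)$.

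What the standard argument does give is the prediction-norm version. This is also what the paper actually uses right after Theorem~2, where it writes $\mathbb{E}[|\langle x_i,\widehat{w}-w\rangle|^2]\le O(\tfrac{\gamma^2 dm}{n}\log\tfrac{1}{\delta})$. Take $\Sigma=\mathbb{E}[xx^{\top}]=\tfrac{1}{m}\mathrm{blockdiag}(\mu_k\mu_k^{\top}+\sigma^2 I_d)$. Then $\mathbb{E}\bigl[(\widehat{w}-w)^{\top}\Sigma(\widehat{w}-w)\mid X\bigr]=\tfrac{\gamma^2}{n}\mathrm{tr}(\Sigma\widehat{\Sigma}^{-1})\le O(\tfrac{\gamma^2 md}{n})$. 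Combining your Step~1 concentration with the Step~2 bounds $n_k\ge n/(2m)$ and $\widehat{\Sigma}_k\succeq\tfrac{1}{2}(\mu_k\mu_k^{\top}+\sigma^2 I_d)$ yields $\lVert\widehat{w}-w\rVert_{\Sigma}\le O\bigl(\gamma\sqrt{\tfrac{md}{n}\log\tfrac{1}{\delta}}\bigr)$.

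Contrary to your fallback statement, no rescaling by $m$ is involved. $\lVert v\rVert_{\Sigma}^2$ is the average over $k$ of your squared blockwise $(\mu_k\mu_k^{\top}+\sigma^2 I_d)$-norms, so your uniform blockwise bound already implies it. By contrast, the squared $m\Sigma$-norm is their sum, so that norm is of order $\sqrt{m}$ times the claimed rate. Your sample-size conditions, $n\gtrsim m\log\tfrac{m}{\delta}$ and $n_k\gtrsim d\log\tfrac{m}{\delta}$, are omitted from the theorem but are genuinely needed, both for $\widehat{w}$ to exist and for Step~2.
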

We skip its proof as it is standard to obtain the above bound using the concentration inequality. 

{\justifying
Next, we assume that we have additional $N$ noisy training examples
$\mathcal{D}_{b}=\{(u_i,y_i)\}_{i=1}^N$, where $N \gg n$ and $u_i \in \mathbb{R}^d$.
Each training example $(u_i,y_i)$ in $\mathcal{D}_b$ is generated by the same
procedure as described above, but only $u_i$ (the $d$-dimensional vector sampled
from the selected Gaussian distribution) is saved, and the corresponding index
information—i.e., which Gaussian is used for sampling $u_i$—is missing.
Because of the lost information, we refer to examples in $\mathcal{D}_b$ as noisy
training examples. For convenience, we denote by $i_k$ the index of the Gaussian
distribution used to generate $u_i$. As indicated by the theorem below, a linear
regression model directly learned from $\mathcal{D}_b$ yields a significantly larger
regression error than $\widehat{w}$ in~(\ref{eqn:wh}).
}

\begin{thm}
Let $w_u \in \R^d$ be the linear regression model we learned from $\D_b$. Assume $N = +\infty$. We have
\[
\E_{k \in [m]}\E_{u \sim \N(\mu_k, \sigma^2 I_d)} \left[\left|\langle u, w_u\rangle - \langle u, w_k\rangle\right|^2\right] \geq \sigma^2\mbox{VAR}\left(w_1, \ldots, w_m\right)
\]
where
\[
\mbox{VAR}\left(w_1, \ldots, w_m\right) = \E_{k \in [m]}\left[\left|w_k - \E_j\left[w_j\right]\right|^2\right]
\]
\end{thm}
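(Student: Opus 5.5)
The plan is to show that the bound holds for \emph{every} fixed vector $w_u \in \R^d$, and therefore in particular for the population least-squares solution. This is exactly what the assumption $N = +\infty$ buys us: $w_u$ becomes a deterministic vector that does not depend on the $u$ being evaluated. Consequently I will never need the closed form of $w_u$, nor assumptions (a)--(c) on the $\mu_k$.

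First, fix $k$ and reparametrize the input as $u = \mu_k + \sigma g$ with $g \sim \N(0, I_d)$. Then
\[
\langle u, w_u\rangle - \langle u, w_k\rangle = \langle \mu_k, w_u - w_k\rangle + \sigma\langle g, w_u - w_k\rangle .
\]
Second, square this and take the expectation over $g$. The cross term vanishes because $\E[g] = 0$. The last term equals $\sigma^2 |w_u - w_k|^2$ because $\E[gg^{\top}] = I_d$. This gives
\[
\E_{u \sim \N(\mu_k, \sigma^2 I_d)}\left[\left|\langle u, w_u - w_k\rangle\right|^2\right] = \langle \mu_k, w_u - w_k\rangle^2 + \sigma^2\left|w_u - w_k\right|^2 \geq \sigma^2\left|w_u - w_k\right|^2 ,
\]
where the inequality simply drops the nonnegative bias term.

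Third, average over $k \in [m]$. I then use the elementary fact that the mean minimizes the mean squared distance: for any $v \in \R^d$,
\[
\E_k |v - w_k|^2 = |v - \E_j[w_j]|^2 + \mbox{VAR}(w_1, \ldots, w_m) \geq \mbox{VAR}(w_1, \ldots, w_m).
\]
Applying this with $v = w_u$ yields the claimed bound $\sigma^2 \mbox{VAR}(w_1, \ldots, w_m)$.

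There is no real obstacle here. The only points requiring care are two. One is stating explicitly that $N = +\infty$ makes $w_u$ a fixed vector that is independent of the test sample $u$, which is what lets the cross term vanish. The other is the bias--variance decomposition in the third step. If one wanted a sharper statement, one could also keep the bias term $\E_k\langle \mu_k, w_u - w_k\rangle^2$. Under assumption (b), this term is roughly of order $\mbox{VAR}(w_1,\ldots,w_m)$ even without the $\sigma^2$ factor, but I do not need that to prove the stated inequality.
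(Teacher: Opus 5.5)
Your proof is correct and follows the paper's argument essentially step for step: the same per-cluster decomposition into $\langle \mu_k, w_u - w_k\rangle^2 + \sigma^2|w_u - w_k|^2$, dropping the bias term, and then using that the mean $\frac{1}{m}\sum_j w_j$ minimizes the average squared distance to the $w_k$. The only difference is that the paper first writes out a closed form for $w_u$, which it never actually uses in the inequality chain, so treating $w_u$ as an arbitrary fixed vector costs you nothing. It also sidesteps a small slip in that closed form, which omits a $\frac{\sigma^2}{1+\sigma^2}\cdot\frac{1}{m}\sum_j w_j$ term.
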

\begin{proof}
Given that $N$ is infinite large, we can replace the average in the expression of linear regression with expectation, leading to the following expression for $w_u$
\[
w_u = \left(\E[u_iu_i^{\top}]\right)^{-1}\E[u_iy_i] = \frac{1}{\left(1 + \sigma^2\right)m}\sum_{k=1}^m \mu_k\mu_k^{\top} w_k
\]
We then compute the expected regression error, i.e.
\begin{align*}
\mathbb{E}\!\left[\left(\langle u_i, w_u\rangle - \langle u_i, w_{i_k}\rangle \right)^2\right]
&= \mathbb{E}\!\left[\left(\langle u_i, w_u - w_{i_k}\rangle \right)^2\right] \\
&= \frac{1}{m}\sum_{k=1}^m \Big(\sigma^2\lVert w_u - w_k\rVert^2
      + \big|\langle \mu_k,\, w_u - w_k\rangle \big|^2\Big) \\
&\ge \frac{\sigma^2}{m}\sum_{k=1}^m \lVert w_u - w_k\rVert^2 \\
&\ge \frac{\sigma^2}{m}\sum_{k=1}^m \left\lVert w_k - \frac{1}{m}\sum_{j=1}^m w_j\right\rVert^2 \\
&= \sigma^2\,\mathrm{VAR}(w_1,\ldots,w_m).
\end{align*}
\end{proof}
Compared to $\wh$ in (\ref{eqn:wh}), whose regression error is given by
\[
\E\left[\left|\langle x_i, \wh - w\rangle\right|^2\right] \leq O\left(\frac{\gamma^2 dm}{n}\log\frac{1}{\delta}\right)
\]
Thus, when
\[
n \geq \Omega\left(\frac{\gamma^2 d m}{\sigma^2 \mbox{VAR}(w_1, \ldots, w_m)}\log\frac{1}{\delta}\right)
\]
we will have $\wh$ yields a smaller regression error than $w_u$, even when $N \rightarrow +\infty$.

Our second approach is to learn from $\D_a$ a model to predict the index of Gaussian distribution, and apply the learned prediction model to augment the input patterns with the predicted index. To this end, we will simply estimate the center of each Gaussian distribution by averaging the input patterns assigned with the same index, i.e.
\begin{eqnarray}
\muh_k = \frac{\sum_{i=1}^n I\left(\left|M_k x_i\right| > 0\right)M_k x_i}{\sum_{i=1}^n I\left(\left|M_kx_i\rangle\right| > 0\right)}, \forall k \in [m]\label{eqn:muh}
\end{eqnarray}
where 
\begin{eqnarray}
M_k = \left(\underbrace{\mathbf{0}_{d\times d}, \ldots, \mathbf{0}_{d\times d}}_{\mbox{$k-1$ zero matrices}}, I_d, \underbrace{\mathbf{0}_{d\times d}, \ldots, \mathbf{0}_{d\times d}}_{\mbox{$m-k$ zero matrices}}\right) \label{eqn:M}
\end{eqnarray}
Using the mean vectors $\muh_k, k \in [m]$ estimated from data $\D_a$, we will find the index of Gaussian distribution for each training examples $(u_i, y_i) \in \D_b, i=1, \ldots, N$ as follows
\begin{eqnarray}
\hat{k}_i = \mathop{\arg\min}_{j \in [m]} \left|u_i - \muh_j\right| \label{eqn:k-hat}
\end{eqnarray}
Finally, for each training example $(u_i, y_i) \in \D_b$, we construct a new input pattern $\xt_i = u_i \otimes e_{\hat{k}_i}$, and learn a linear regression model $\wt$ from the training examples $\D_c =\{(\xt_i, y_i), i=1, \ldots, N\}$. The following theorem bounds the recovery error for $\wt$.
\begin{thm}
Assume 
\[
n \geq md\log\frac{N}{\delta}
\]
Then, with a probability at least $1 - \delta$, we have
\[
\left|\wt - w\right| \leq O\left(\sqrt{\frac{md}{N}\log\frac{1}{\delta}}\right)
\]
\end{thm}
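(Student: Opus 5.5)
The plan is a two-stage argument. First, show that under the stated sample condition the nearest-center rule recovers the true cluster index for every example in $\D_b$, with high probability. Second, once all indices are correct, $\D_c$ is an i.i.d.\ sample from the same distribution as $\D_a$, only of size $N$. Then $\wt$ is the ordinary least-squares estimator on $N$ clean samples, and the argument behind Theorem 1 applies with $n$ replaced by $N$. This gives $|\wt - w| \le O(\gamma\sqrt{\frac{md}{N}\log\frac{1}{\delta}})$, which is the claimed bound when $\gamma$ is treated as a constant. We split the failure probability as $\delta/2$ for the index-recovery event and $\delta/2$ for the regression step.

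\textbf{Step 1: accuracy of the estimated centers.} Since indices are uniform, a Chernoff bound shows each cluster $k$ receives $n_k \ge n/(2m)$ samples in $\D_a$ with probability $1-\delta/4$, provided $n \gtrsim m\log\frac{m}{\delta}$, which the assumption $n \ge md\log\frac{N}{\delta}$ covers. Conditional on the $n_k$, the estimator $\muh_k$ in (\ref{eqn:muh}) is exactly the empirical mean of $n_k$ draws from $\N(\mu_k,\sigma^2 I_d)$. The selector $M_k x_i$ is nonzero only when $i$ belongs to cluster $k$, almost surely. Standard Gaussian norm concentration and a union bound over $k$ then give
\[
|\muh_k - \mu_k| \le O\left(\sigma\sqrt{\frac{d + \log(m/\delta)}{n_k}}\right) \le O\left(\sigma\sqrt{\frac{m(d+\log(m/\delta))}{n}}\right) = o(1).
\]
Here we use $\sigma = o(1)$ together with the sample assumption.

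\textbf{Step 2: correct assignment of all $N$ points; this is the main obstacle.} For $u_i = \mu_k + \sigma g_i$ with true index $k$, the point is misassigned to some $j \neq k$ only if $|u_i - \muh_j| \le |u_i - \muh_k|$. Expanding with $\muh_j = \mu_j + \epsilon_j$, this requires
\[
|\mu_k - \mu_j|^2 \lesssim 2\sigma\langle g_i, \mu_j - \mu_k\rangle + O\bigl(\max_l|\epsilon_l|\,(\sigma|g_i| + \sqrt{d})\bigr).
\]
By assumption (c) the left side is $\Omega(d)$. The term $\langle g_i, \mu_j-\mu_k\rangle$ is Gaussian with standard deviation at most $2\sqrt{d}$, since $|\mu_k|=\sqrt d$. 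A union bound over the $N$ points and the $m$ competitors bounds it by $O(\sqrt{d\log(Nm/\delta)})$ simultaneously for all pairs. We also have $|g_i| \le O(\sqrt{d} + \sqrt{\log(N/\delta)})$ for all $i$. Since $\sigma=o(1)$ and $\epsilon_l = o(1)$, the right side is $o(\sqrt{d\log(Nm/\delta)}) + o(\sqrt d)$.

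This is dominated by $\Omega(d)$ once $d \gtrsim \log(Nm/\delta)$, or more generally once $\sigma^2\log(Nm/\delta) = o(d)$. I expect this condition to be exactly where the $\log\frac{N}{\delta}$ in the hypothesis enters, through the $\muh$ error bound together with the union bound over $N$ points. If needed, I would state this implicit relation between $\sigma$, $d$ and $\log(N/\delta)$ explicitly as part of the ``$\sigma=o(1)$'' regime. With probability $1-\delta/4$, every $\hat k_i$ equals the true index.

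\textbf{Step 3: regression on $\D_c$.} On the event from Step 2, $\xt_i = x_i$ and $y_i = \langle w,\xt_i\rangle + z_i$. We then have $\wt - w = (\frac1N\sum_i \xt_i\xt_i^\top)^{-1}\frac1N\sum_i \xt_i z_i$.

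The population covariance is block-diagonal, with blocks $\frac1m(\mu_k\mu_k^\top + \sigma^2 I_d)$. Using (b), we bound its smallest eigenvalue as in Theorem 1, following the same concentration argument the paper invokes there. Matrix concentration gives the empirical covariance within a constant factor of this population covariance. Sub-Gaussian concentration of $\frac1N\sum_i \xt_i z_i$ then bounds the noise term, and combining the two yields $O(\gamma\sqrt{\frac{md}{N}\log\frac{1}{\delta}})$.

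One subtlety is that the assignment event depends on the $z_i$ only through nothing: the $\hat k_i$ are functions of $\D_a$ and the $u_i$ only. So conditioning on correct assignment leaves the $z_i$ i.i.d.\ Gaussian, and no dependence issue arises.
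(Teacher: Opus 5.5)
Your route is the paper's: establish $\hat k_i=k_i$ for all $i\in[N]$, so that $\wt$ is ordinary least squares on $N$ correctly labelled samples, then rerun the Theorem~1 analysis. Your Step~2 is a sharper version of the paper's triangle-inequality step, since you project the noise onto $\mu_j-\mu_k$ instead of bounding $|u_i-\mu_{k_i}|$ by $O(\sigma\sqrt d)$. You are also right that a relation like $\sigma^2\log\frac{Nm}{\delta}=o(d)$ is needed; the paper uses it silently when it writes $\sigma\sqrt{d+O(\sqrt{d\log(N/\delta)})}\le O(\sigma\sqrt d)$. It does not, however, come from $n\ge md\log\frac N\delta$ as you hope. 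That hypothesis only controls the center errors $\epsilon_l$, whereas $2\sigma\langle g_i,\mu_j-\mu_k\rangle$ involves only the fresh noise of the $N$ points. Even with exact centers, of order $N\exp(-\Theta(d/\sigma^2))$ points are misassigned in expectation, so this relation must be imposed as a separate assumption.

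The genuine gap is Step~3: assumption~(b) does not control the smallest eigenvalue of $\Sigma=\E[\xt\xt^\top]$. Condition~(b) concerns $\frac1m\sum_k\mu_k\mu_k^\top$, the covariance of the \emph{unlabelled} $u$ used in Theorem~2. After the one-hot lifting the blocks decouple, and each block $\frac1m(\mu_k\mu_k^\top+\sigma^2 I_d)$ has $d-1$ eigenvalues equal to $\sigma^2/m$. Your ``smallest eigenvalue times noise'' combination therefore only gives $|\wt-w|\lesssim\frac{m}{\sigma^2}\gamma\sqrt{\frac dN\log\frac1\delta}$. This is not slack in the argument. Conditionally on the design, $\wt-w\sim\N(0,\gamma^2\widehat{S}^{-1})$ with $\widehat{S}=\sum_i\xt_i\xt_i^\top$. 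Hence $\E|\wt-w|^2\approx\gamma^2\tr(\Sigma^{-1})/N\approx\gamma^2m^2d/(\sigma^2N)$, which exceeds the claimed Euclidean rate by a factor of order $\sqrt m/\sigma\to\infty$ under $\sigma=o(1)$. So no argument can deliver the Euclidean bound as written, and the paper's ``following the same analysis'' (with Theorem~1's proof skipped) hides exactly this defect.

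What is true, and is all the comparison with Theorem~2 needs, is the prediction-norm bound $|\Sigma^{1/2}(\wt-w)|\le O(\gamma\sqrt{\frac{md}{N}\log\frac m\delta})$. This matches the paper's own displayed regression error $\E|\langle x_i,\wh-w\rangle|^2$. You get it by running your concentration argument block by block on the whitened features $(\mu_k\mu_k^\top+\sigma^2 I_d)^{-1/2}u_i$. These are isotropic with $O(1)$ sub-Gaussian norm, and each block has $N_k\approx N/m\gtrsim d+\log\frac m\delta$ of them.
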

Comparing Theorem 3 to Theorem 1, we can see that the solution estimated the synthetical dataset $\D_c$ is significantly closer to the oracle $w$ than the one that is learned from the clean dataset $\D_a$. 

\subsection{Proof of Theorem 3}
\begin{proof}
First, since $x_i = u_i \otimes e_{k_i}$, we have $|M_kx_i| > 0$ if and only if $k = k_i$. Hence, the expression for $\muh_k$ in (\ref{eqn:muh}) computes the average of $u_i \in \R^d$ from $\D_a$ that are sampled from the $k$th Gaussian distribution $\N(\mu_k, \sigma^2 I_d)$. We denote by $m_k$ the number of training examples in $\D_a$ that are sampled from $\N(\mu_k, \sigma^2 I_d)$. Using the chernoff bound, it is easy to show that, with a probability at least $1 - \delta$, for any $k \in [m]$, we have
\[
\left|m_k - \frac{n}{m}\right| \leq O\left(\frac{n}{m}\log\frac{m}{\delta}\right)
\]
Thus, when $n \geq \Omega(m\log(m/\delta))$, we have
\[
m_k \geq \Omega\left(\frac{n}{m}\right)
\]
Then, using the concentration inequality for vectors, we have, with a probability $1 - \delta$,
\[
\left|\muh_k - \mu_k\right| \leq O\left(\frac{\sigma d}{m_k}\log\frac{m}{\delta} + \sigma\sqrt{\frac{d}{m_k}\log\frac{m}{\delta}}\right) = O\left(\sigma\sqrt{\frac{md}{n}\log\frac{m}{\delta}}\right)
\]

Since $\left|u_i - \mu_{k_i}\right|^2 \sim \sigma^2 \chi^2_d$, with a probability $1 - \delta$, 
\[
\left|u_i - \mu_{k_i}\right|^2  = \sigma^2 \left(d + O\left(\sqrt{d\log\frac{N}{\delta}}\right)\right), \forall i [N]
\]
we have
\begin{eqnarray*}
\max_{i \in [N]} \left|u_i - \muh_{k_i}\right| & \leq & \max_{i \in [N]} \left|\mu_i - \mu_{k_i}\right| + \max\limits_{j \in [m]}\left|\mu_j - \muh_j\right| \\
& \leq & \sigma\sqrt{d + O\left(\sqrt{d\log\frac{N}{\delta}}\right)} + O\left(\sigma\sqrt{\frac{md}{n}\log\frac{m}{\delta}}\right) \leq O\left(\sigma\sqrt{d}\right)
\end{eqnarray*}
and
\[
\min\limits_{i \in [N], k \neq k_i}\left|u_i - \muh_{k}\right| = \min\limits_{j \neq k} \left|\mu_j - \mu_k\right| - \max\limits_{i \in [N]}\left|u_i - \muh_{k_i}\right| = \Omega(\sqrt{d})
\]
As a result, when $\sigma = o(1)$, we have, with a probability $1 - \delta$, 
\[
\hat{k}_i = k_i, \; \forall i \in [N]
\]
indicating that we obtain the correct indices of Gaussian distributions for all the training examples in $\D_b$. Using the augmented training dataset $\D_c$, we learn a linear regression model $\wt \in \R^{md}$ as
\[
\wt = \left(\frac{1}{N}\sum_{i=1}^N \xt_i \xt_i^{\top}\right)^{-1}\left(\frac{1}{N}\sum_{i=1}^N \xt_i y_i\right)
\]
Since $\hat{k}_i = k_i$ for all $i \in [N]$, we have
\[
\wt = \left(\frac{1}{N}\sum_{i=1}^N x_i x_i^{\top}\right)^{-1}\left(\frac{1}{N}\sum_{i=1}^N x_i y_i\right)
\]
where $x_i = u_i \otimes e_{k_i}$. Following the same analysis, we have, with a probability at least $1 - \delta$, 
\[
\left|\wt - w\right| \leq O\left(\sqrt{\frac{md}{N}\log\frac{1}{\delta}}\right)
\]
\end{proof}

%% file: sections/app2_dataset.tex
\subsection{Dataset}

As illustrated in Table~\ref{tab:dataset}, the dataset comprises real-time analysis data and synthetic data generated through super-resolution applied to the ERA5 reanalysis \cite{hersbach2020era5}. 

We select 13 standard vertical pressure levels (50, 100, 150, 200, 250, 300, 400, 500, 600, 700, 850, 925, and 1000 hPa) and include key atmospheric variables at these levels: geopotential, temperature, u component of wind, v component of wind, and specific humidity. In addition to widely used surface level variables, such as 2-meter temperature (T2M), 2-meter dewpoint temperature (D2M), 10-meter wind components (U10, V10), and mean sea-level pressure (MSL), we also incorporate several specialized surface variables that enhance specific forecasting applications. These include surface pressure (SP), essential for tracking synoptic-scale weather systems and mass conservation; 100-meter wind components (U100, V100), which are valuable for wind power forecasting; low cloud cover (LCC), total cloud cover (TCC), surface solar radiation downwards (SSRD), and surface direct solar radiation (FDIR), which support solar power forecasting; and sea surface temperature (SST), which is critical for typhoon analysis and marine shipping applications. Furthermore, total column water (TCW), total column vertically-integrated water vapour (TCWV), and total precipitation (TP) are included to support precipitation forecasting.

\subsubsection{Real-time Analysis Data}
Real-time analysis data with high resolution of $0.1^{\circ}$, produced every 6 hours via 4D-Var data assimilation, serves as the initial condition for IFS-HRES forecasts. This analysis is generated by assimilating a wide range of observational sources, including satellite retrievals, radiosonde measurements, and surface station reports, thereby providing a dynamically consistent and physically coherent representation of the atmospheric state. Similarly, most AI-based models use upscaled $0.25^{\circ}$ data as input for real-time forecasting. The high-resolution analysis data has been provided by the European Centre for Medium-Range Weather Forecasts (ECMWF) since mid-2016. For the accumulated variables FDIR, SSRD, and TP, which are not available in the analysis data, we use IFS-HRES forecasts as substitutes. Specifically, FDIR and SSRD are taken as 1-hour accumulations over the past hour, while TP is provided in two variants: TP1H, representing 1-hour accumulations over the past hour, and TP6H, representing 6-hour accumulations over the past 6 hours. Note that the baseline operational IFS-HRES data utilized for evaluation corresponding to the year 2025 follows the active ECMWF operational cycles during that period (primarily CY49R1, transitioning to subsequent updates as implemented).

\subsubsection{Synthetic Data by Super-resolution}
The input for super-resolution is ERA5, a global atmospheric reanalysis dataset produced by the ECMWF. ERA5 provides detailed information on Earth’s climate and weather conditions from 1940 to the present. While it offers a spatial resolution of $0.25^{\circ}$, it significantly exceeds real-time analysis data in volume. Therefore, to leverage this rich resource, we feed ERA5 into the per-variable super-resolution model to generate synthetic high-resolution data at $0.1^{\circ}$, matching the resolution of the real-time analysis. The input variables are drawn from ERA5 and correspond to the same set of variables listed in Table~\ref{tab:dataset}, ensuring consistency between the synthetic and real training data.

\input{tables/0_dataset}

\subsubsection{Data Partitioning Strategy}
To ensure a robust evaluation and strictly prevent temporal data leakage, we partition our dataset chronologically across different stages:
\begin{itemize}
    \item \textbf{Super-Resolution Training:} The super-resolution model is trained using paired ERA5 and real-time $0.1^{\circ}$ analysis data from 2018 to 2024.
    \item \textbf{Synthetic Data Generation:} Using the trained super-resolution model, we synthesize 10 years of $0.1^{\circ}$ data from ERA5 covering 2007 to 2016. Due to storage limitations, only 10 years of data are generated, although using more data would likely yield greater benefits.
    \item \textbf{Forecasting Model Training:} The downstream \NAME forecasting model is trained on a combination of the 10-year synthetic dataset (2007--2016) and the real-time analysis data (2017--2024).
    \item \textbf{Evaluation:} Data from the entire year of 2025 is strictly reserved for model evaluation on unseen future states.
\end{itemize}

\subsection{Evaluation Metrics}

We adopt the standard weather evaluation methodology outlined in WeatherBench \cite{rasp2024weatherbench2benchmarkgeneration}, focusing on forecasts initialized at 00/12 UTC for the year 2025. There are two metrics: latitude-weighted Root Mean Square Error (RMSE) and latitude-weighted Anomaly Correlation Coefficient (ACC), which have also been widely used in ML-based models \cite{niu2025utilizing, graphcast, Fuxi_nature, pangu_nature}.

\subsubsection{Root mean square error (RMSE)}
RMSE is a commonly utilized statistical metric in geospatial analysis and climate science to evaluate the precision of a model's predictions or estimates over various latitudinal ranges:
\begin{equation}
    \label{rmse_metric}
    RMSE = \frac{1}{N} \Sigma_{t=1}^N \sqrt{\frac{1}{H \times W}\Sigma_{i=1}^H \Sigma_{j=1}^W L(i)(\tilde{X}_{t, i, j} - X_{t, i, j})^2}.
\end{equation}
The latitude weighting factor $L(i)$ is used to account for the differences in surface area represented by various latitudes on a spherical Earth. It is given by:
\begin{equation}
    \label{lat_weight}
    L(i) = \frac{\mathrm{cos}(\mathrm{lat}(i))}{\frac{1}{H} \Sigma_{k=1}^H\mathrm{cos}(\mathrm{lat}(k))}.
\end{equation}

\subsubsection{Anomaly correlation coefficient (ACC)}
ACC measures the spatial correlation between the prediction anomalies $\tilde{X}'$ and ground truth anomalies $X'$, both computed relative to the historical climatology. The ACC is evaluated spatially at each forecasting time step $t$ and then averaged over all $N$ valid times in the test set:
\begin{equation}
    \label{eq:acc}
    ACC = \frac{1}{N} \sum_{t=1}^{N} \frac{\sum_{i=1}^{H} \sum_{j=1}^{W} L(i) \tilde{X}'_{t,i,j} X'_{t,i,j}}{\sqrt{ \left[ \sum_{i=1}^{H} \sum_{j=1}^{W} L(i) (\tilde{X}'_{t,i,j})^2 \right] \left[ \sum_{i=1}^{H} \sum_{j=1}^{W} L(i) (X'_{t,i,j})^2 \right] }},
\end{equation}
where the anomalies are defined as:
\begin{equation}
    \label{eq:acc_anomalies}
    \tilde{X}'_{t,i,j} = \tilde{X}_{t,i,j} - C_{t,i,j}, \quad X'_{t,i,j} = X_{t,i,j} - C_{t,i,j}.
\end{equation}
Unlike taking a simple temporal mean over the evaluated test set, the climatology $C_{t,i,j}$ here denotes the historical day-of-year mean for the specific location $(i, j)$ corresponding to the validation time $t$, following the standard WeatherBench~2 guidelines \cite{rasp2024weatherbench2benchmarkgeneration}.

%% file: tables/0_dataset.tex
\begin{table*}[h]
\caption{Variables used in the training. The "Type" column categorizes variables as static (non-time-varying) properties, time-varying, single-level (surface) properties, or time-varying atmospheric properties. The "Variable Name" and "Abbrev." columns denote the labels assigned by ECMWF, whereas the "ECMWF ID" column specifies the numerical identifier allocated by the organization. The "Input" and "Output" columns indicate whether a variable functions as an input to or an output from BaguanHR.}
\label{tab:dataset}
\begin{center}

\resizebox{1\textwidth}{!}{
\begin{tabular}{llccccc}
\toprule
Type & Variable name & Abbrev. & Input & Output & ECMWF ID \\
\midrule
Atmospheric & Geopotential & Z & $\checkmark$ & $\checkmark$ & 129 \\
Atmospheric & U wind component & U & $\checkmark$ & $\checkmark$ & 131 \\
Atmospheric & V wind component & V & $\checkmark$ & $\checkmark$ & 132 \\
Atmospheric & Temperature & T & $\checkmark$ & $\checkmark$ & 130 \\
Atmospheric & Specific humidity & Q & $\checkmark$ & $\checkmark$ & 133 \\
\midrule
Single & 2-metre temperature & T2M & $\checkmark$ & $\checkmark$ & 167  \\
Single & 2-metre dewpoint temperature & D2M & $\checkmark$ & $\checkmark$ & 168 \\
Single & 10-metre U wind component & U10 & $\checkmark$ & $\checkmark$& 165 \\
Single & 10-metre V wind component & V10 & $\checkmark$ & $\checkmark$& 166 \\
Single & 100-metre U wind component & U100 & $\checkmark$ & $\checkmark$& 228246 \\
Single & 100-metre V wind component & V100 & $\checkmark$ & $\checkmark$& 228247 \\
Single & mean sea-level pressure & MSL & $\checkmark$ & $\checkmark$& 151 \\
Single & total cloud cover & TCC & $\checkmark$ & $\checkmark$& 164 \\
Single & low cloud cover & LCC & $\checkmark$ & $\checkmark$& 186 \\
Single & surface solar radiation downward & SSRD & $\checkmark$ & $\checkmark$& 169 \\
Single & total sky direct solar radiation at surface & FDIR & $\checkmark$ & $\checkmark$& 228021 \\
Single & sea surface temperature & SST & $\checkmark$ & $\checkmark$& 34 \\
Single & total column water & TCW & $\checkmark$ & $\checkmark$& 136 \\
Single & total column water vapor & TCWV & $\checkmark$ & $\checkmark$& 137 \\
Single & total precipitation & TP & $\times$ & $\checkmark$& 228 \\
\midrule
Static & Land-sea mask & LSM & $\checkmark$ & $\times$ & 172 \\
Static & geopotential at surface & - & $\checkmark$ & $\times$ & 162051 \\
Static & angle of sub gridscale orography & ANOR & $\checkmark$ & $\times$ & 162  \\
Static & anisotropy of sub gridscale orography & ISOR & $\checkmark$ & $\times$ & 161 \\
Static & lake cover & CL & $\checkmark$ & $\times$& 26 \\
Static & soil type & SOILTYP & $\checkmark$ & $\times$ & 500205  \\

\bottomrule
\end{tabular}
}
\end{center}
\end{table*}

%% file: sections/app3_model_implementations.tex
\subsection{Problem Statement}
\NAME is an advanced data-driven framework designed to generate accurate, high-resolution global medium-range weather forecasts. \NAME takes the atmospheric state at time $t$ as input, represented as $X^{t} \in \mathbb{R}^{V \times H \times W}$, where $V$ denotes the number of atmospheric and surface variables considered in the input, while $H$ and $W$ represent the height (latitude) and width (longitude) of the grid, respectively. In our high-resolution scenario with a spatial resolution of $0.1^\circ$, $H = 1801$ and $W = 3600$ for the Earth. \NAME aims to establish the mapping function $\tilde{X}^{t+\Delta t} = \text{\NAME}(X^{t})$, where $\Delta t$ denotes the time interval (6 hours). The model then produces 6-hourly forecasts into the future in an auto-regressive manner.

\noindent \textbf{System Overview.} To achieve this goal under the severe scarcity of high-resolution training data, our overall pipeline is decoupled into two critical components: (1) a \textbf{Super-Resolution Data Generator}, which first synthesizes extensive $0.1^{\circ}$ pseudo-labels from decades of coarse historical data; and (2) the \textbf{\NAME Forecasting Model}, which is subsequently trained from scratch on this augmented dataset to perform the core auto-regressive forecasting task. We detail both components below.

\subsection{Super-Resolution Data Generator (Swin2SR)}

Both data-driven models and numerical weather prediction systems fundamentally treat weather forecasting as an initial value problem governed by the Navier-Stokes equations and related atmospheric dynamics. In contrast, super-resolution (SR) is a spatial reconstruction task. It does not require full adherence to the temporal integration of physical equations but rather relies heavily on spatial priors and topographic information. As demonstrated by recent studies \cite{vandal2017deepsd, harris2022generative, addison2022machine}, training a separate SR model for individual variables effectively avoids cross-channel interference and yields significantly higher reconstruction quality. Thus, we employ a tailored Swin2SR \cite{conde2022swin2sr} architecture to perform variable-specific downscaling of the $0.25^\circ$ ERA5 data to $0.1^\circ$.

The architecture is specifically designed to leverage high-resolution static topographies. The forward process features two key components:
\begin{itemize}
    \item \textbf{Topographic-Conditioned Embedding:} The generator receives both the $0.25^\circ$ coarse-grained atmospheric variables and the $0.1^\circ$ high-resolution static topographical constants (e.g., land-sea mask, orography). Through a dual-branch convolutional stem with different strides, these two inputs are spatially aligned and fused into a unified latent space. This ensures that the generated atmospheric features are strictly conditioned on the underlying physical geography.
    \item \textbf{High-Frequency Residual Learning:} Following deep feature extraction by the Swin2SR backbone and PixelShuffle upsampling, the model avoids generating the absolute target values directly. Instead, it outputs a residual map $\Delta X$. The final $0.1^\circ$ pseudo-label is obtained by adding this learned high-frequency residual to the bicubically interpolated low-resolution input:
    \begin{equation}
        \hat{X}_{hr} = \mathrm{Interpolate}(X_{lr}, \text{mode='bicubic'}) + \Delta X
    \end{equation}
    This explicit residual connection forces the network to focus exclusively on reconstructing missing fine-scale textures, yielding high physical fidelity.
\end{itemize}

\subsection{The \NAME Forecasting Model}
The architecture of the \NAME forecasting model follows an encoder-backbone-decoder paradigm with a global residual connection. It effectively transforms physical states into a latent space, simulates atmospheric dynamics over time $\Delta t$, and projects the predicted increments back to the physical grid. Prior to processing, all input variables $X^t$ are standardized using \textbf{Z-score normalization} based on the historical climatological mean and standard deviation, ensuring stable gradient propagation.

\paragraph{Encoder: Variable-Specific Tokenization and Hierarchical Embedding.}
The encoder is responsible for mapping the high-dimensional, multi-variable physical inputs into a unified latent space. Given the input atmospheric state $X^t \in \mathbb{R}^{V_{in} \times H \times W}$ (where $V_{in}=86$), each variable is first independently tokenized via a patchify mechanism with a patch size of $p=9$. Subsequently, a learnable variable-specific embedding is added to each variable's token sequence to preserve its physical identity. Unlike natural images, atmospheric states involve tens of distinct physical variables. A naive cross-attention aggregation across all $V_{in}$ variables would result in an excessive computational and memory bottleneck. To address this, we propose a \textbf{Hierarchical Weather Embedding}, which utilizes a dual-stage cross-attention mechanism:
\begin{enumerate}
    \item \textbf{Group-Wise Aggregation:} The 86 variables are partitioned into 12 physical groups. For the five 3D upper-air variables ($z, u, v, t, q$), we create an \textit{overlapping split} based on pressure levels to maintain vertical continuity: levels between $50 \sim 600$ hPa form the ``upper-level'' groups, while levels between $400 \sim 1000$ hPa form the ``lower-level'' groups. This yields 10 atmospheric groups. The time-varying surface variables (e.g., t2m, u10, sp) and the static geographic variables (e.g., land-sea mask, orography) form the 11th and 12th groups. Within each group, a dedicated learnable channel query aggregates information via cross-attention, producing a group-specific summary vector.
    \item \textbf{Global Ensemble Aggregation:} An ensemble query then performs a secondary cross-attention over the 12 group-specific summaries. The resulting vectors are concatenated and projected to form the final multi-modal latent token of dimension $C = 1536$.
\end{enumerate}
Furthermore, to provide the model with essential seasonal and diurnal context, we inject a \textbf{Time Embedding} into the latent representation. The temporal features—the day of the year ($1$--$365$) and the hour of the day ($0$--$23$)—are passed through distinct learnable embedding dictionaries and added to the spatial tokens. An explicit full positional embedding is also applied to preserve global geographic coordinates.

\paragraph{Swin Transformer Backbone.}
If the embedded tensor that the backbone receives as input can be viewed as a latent grid on which the simulation is performed, then the backbone acts as a neural physics simulator. We opt for a Swin Transformer V2 \cite{liu2022swin} backbone with cross-shaped windows. The embeddings are passed through a deep stack of 24 Swin Transformer blocks to facilitate interactions among different patches. In contrast to the standard Vision Transformer (ViT), our choice of Swin Transformer is critical for $0.1^\circ$ resolution. Each layer performs local self-attention operations between tokens within $20 \times 40$ local windows. In alternating layers, the windows are shifted, inherently enforcing locality and emulating the message-passing computations performed by traditional numerical PDE solvers. This mitigates the issue of attention maps spreading uniformly without local structure, while strictly avoiding the quadratic complexity of vanilla Transformers.

\paragraph{Decoder and Global Residual Connection.}
After latent-space evolution by the backbone, the decoder projects the latent representation back to the original spatial dimensions. Rather than directly predicting the absolute atmospheric state at the next time step, the model predicts the dynamical increment over $\Delta t$, which is added to the current input fields through a global skip connection. Accumulated variables such as total precipitation are generated directly. This residual design constrains the prediction trajectory and improves the stability of auto-regressive forecasting. Finally, inverse Z-score normalization is applied to recover the physical atmospheric variables.

\subsection{Experimental Setups}

The network architecture hyperparameters and computational complexity of both the Super-Resolution model and the forecasting model are detailed in Table~\ref{tab:arch_hyperparams}. 

\paragraph{Objective Loss Function.}
The super-resolution model and \NAME forecasting model are both optimized using the \textbf{Latitude-weighted Mean Absolute Error (MAE)} as the objective loss function. This ensures that the optimization process accounts for the varying grid cell areas on the spherical Earth topology. The loss $\mathcal{L}$ is defined as:
\begin{equation}
    \mathcal{L}_{MAE} = \frac{1}{V \times H \times W} \sum_{v=1}^{V} \sum_{i=1}^{H} \sum_{j=1}^{W} L(i) \left| \tilde{X}^{t+\Delta t}_{v, i,j} - X^{t+\Delta t}_{v, i,j} \right|
\end{equation}
where $L(i) = \frac{\cos(\text{lat}(i))}{\frac{1}{H}\sum_{k=1}^H \cos(\text{lat}(k))}$ is the latitude weighting factor.

\paragraph{Training Stages and Configurations.}
The training pipeline encompasses the upstream data generation and the downstream forecasting model optimization. All models are trained on a cluster of 32 NVIDIA A800 GPUs. To ensure stable convergence and prevent error accumulation in the forecasting phase, the entire process is conducted in four progressive stages:

\begin{itemize}
    \item \textbf{Stage 0: Super-Resolution Data Synthesis.} Thanks to the lightweight architectural design of the Swin2SR model and the localized nature of the spatial reconstruction task, the per-variable SR models are highly efficient to optimize. Training the SR model for a single variable takes only $\sim24$ hours on 8 GPUs. Subsequently, synthesizing the entire 10-year (2007--2016) $0.1^\circ$ global dataset requires approximately 2 days of inference time across the cluster.
    
    \item \textbf{Stage 1: Pretraining on Analysis Data.} The \NAME forecasting model is pretrained from scratch for a single-step forecast ($\Delta t = 6$h) using the EC analysis data (2017--2024) exclusively. This stage spans approximately 14 days (250,000 optimization steps). The learning rate is warmed up linearly to $5 \times 10^{-4}$ and then decays to $1 \times 10^{-8}$ following a cosine annealing schedule.
    
    \item \textbf{Stage 2: Synthetic Fine-Tuning.} We extend the single-step training by incorporating the additional 10 years of synthetic $0.1^\circ$ data generated in Stage 0. This phase lasts approximately 4 days (70,000 steps) with a linearly warmed-up learning rate peaking at $2 \times 10^{-5}$.
    
    \item \textbf{Stage 3: Rollout Training.} Finally, to mitigate error accumulation over long horizons, we perform 2 days of auto-regressive multi-step fine-tuning using both the synthetic and real datasets. We employ a self-adaptive replay buffer and store up to 40 historical states per GPU worker, with the buffer size chosen according to strict CPU memory constraints. During this stage, the learning rate is fixed at $2 \times 10^{-6}$.
\end{itemize}

\begin{table}[htbp]
\centering
\caption{Architectural hyperparameters of both the Super-Resolution Data Generator (Swin2SR) and the \NAME forecasting model. Note that the original latitude dimension of 1801 is cropped to 1800 to ensure exact divisibility by the patch size (9×9).}
\label{tab:arch_hyperparams}
\resizebox{0.95\textwidth}{!}{
\begin{tabular}{lcc}
\toprule
\textbf{Hyperparameter}                 & \textbf{Super-Resolution (Swin2SR)} & \textbf{\NAME Forecasting Model} \\
\midrule
Input Resolution ($H \times W$)         & $720 \times 1440$ ($0.25^\circ$) & $1800 \times 3600$ ($0.1^\circ$) \\
Output Resolution ($H \times W$)        & $1800 \times 3600$ ($0.1^\circ$) & $1800 \times 3600$ ($0.1^\circ$) \\
Input Channels ($V_{in}$)               & 82 (Atmospheric) + 6 (Constants) & 80 (Atmospheric/Surface) + 6 (Constants) \\
Output Channels ($V_{out}$)             & 1 (Variable-specific) & 82 (Including Precipitation) \\
Patch Size ($p \times p$)               & $1 \times 1$ & $9 \times 9$ \\
Latent Embedding Dimension ($C$)        & 384 & 1536 \\
Swin Transformer Blocks                 & [6, 6, 6, 6] \textit{(4 stages)} & 24 \textit{(Single deep stage)} \\
Window Size                             & $12 \times 12$ & $20 \times 40$ \\
Attention Heads                         & [12, 12, 12, 12] & 16 \\
\midrule
\textbf{Total Parameters}               & 500.74 M & 821.65 M  \\
\bottomrule
\end{tabular}
}
\end{table}

\begin{table}[htbp]
\centering
\caption{Optimization configurations across the Super-Resolution (SR) generation stage and the progressive training stages of the \NAME forecasting model.}
\label{tab:training_setups}
\resizebox{\textwidth}{!}{
\begin{tabular}{lcccc}
\toprule
\textbf{Configuration} & \textbf{SR Model (Data Gen.)} & \textbf{\NAME Stage 1} & \textbf{\NAME Stage 2} & \textbf{\NAME Stage 3} \\
\midrule
Task Description & Spatial downscaling & Single-step forecast & Single-step forecast & Auto-regressive \\
Batch Size (Per GPU) & 4 & 1 & 1 & 1\\
Optimizer & AdamW & AdamW & AdamW & AdamW \\
$\beta_1, \beta_2$ & $0.9, 0.99$ & $0.9, 0.95$ & $0.9, 0.99$ & $0.9, 0.99$ \\
Weight Decay & $1 \times 10^{-5}$ & $0.05$ & $1 \times 10^{-5}$ & $1 \times 10^{-5}$ \\
Peak Learning Rate & $2 \times 10^{-4}$ & $5 \times 10^{-4}$ & $2 \times 10^{-5}$ & $2 \times 10^{-6}$ (Fixed) \\
LR Schedule & Cosine Annealing & Linear + Cosine & Linear + Cosine & Constant \\
Optimization Steps & 50,000 & 250,000 & 70,000 & 5,000 \\
\bottomrule
\end{tabular}}
\end{table}

\paragraph{Data Mixing and Sampling Strategy.}
During the Synthetic Fine-Tuning (Stage 2) and Rollout Training (Stage 3), the model learns from a combined dataset comprising the 2017-2024 real EC analysis data and 10 years of synthetic SR data. Prior to training, this mixed dataset is uniformly partitioned and sharded across the local NVMe SSDs of the compute nodes. During optimization, each GPU worker independently samples batches strictly from its local SSD shard. Given the comparable volume of the two data sources, this decentralized sampling strategy naturally yields a heterogeneous mini-batch composition where the ratio of real to synthetic samples is continuously maintained at approximately \textbf{1:1}. A detailed exposition of this I/O-optimized data loading framework is provided in Section \ref{app:io_and_cost}.

\subsection{Self-Adaptive Rollout and Replay Buffer Strategy}

In auto-regressive weather forecasting (Stage 3 rollout training), multi-step fine-tuning is necessary to reduce error accumulation at long lead times. While mainstream approaches often rely on simple unrolled backpropagation, recent models such as Fengwu \cite{han2024fengwu} and Aurora \cite{Bodnar2024AuroraAF} have introduced replay-buffer-based designs to improve long-range training stability. However, uniformly optimizing buffered long-horizon states can bias training toward long-range error correction and degrade short-range predictive fidelity.

To address this trade-off under limited GPU and CPU memory, we propose a \textbf{Self-Adaptive Rollout Strategy} with two components:

\paragraph{Lead-Time-Aware Loss Weighting.}
Errors typically grow with lead time in an unrolled trajectory, so treating all steps equally can cause long-range losses to dominate optimization. We therefore apply a decaying weight based on the auto-regressive step index. For the $k$-th prediction step ($k=1,2,\dots,K$), the loss is weighted by $1/k$, yielding
\begin{equation}
    \mathcal{L}_{\text{rollout}} = \sum_{k=1}^{K} \frac{1}{k}\,\mathcal{L}_{\text{MAE}}(\tilde{X}^{t_0 + k\Delta t}, X^{t_0 + k\Delta t}).
\end{equation}
This harmonic decay emphasizes short-range accuracy while still providing supervision for longer-horizon prediction.

\paragraph{Stochastic Buffer Replacement Schedule.}
Replay buffers in ML-based weather models are often implemented as First-In-First-Out (FIFO) queues. For a high-resolution model like \NAME, however, the large activation footprint limits the local batch size to very small values (often $N=1$ per GPU). Under distributed training, a FIFO buffer can therefore make the global batch overly concentrated on a narrow range of lead times, reducing trajectory diversity and destabilizing optimization. To alleviate this issue, we replace FIFO with a \textbf{Stochastic Buffer Update Strategy}. When a trajectory in the 40-sample CPU buffer reaches its maximum rollout horizon and is removed, we randomly insert initialization states from different historical timestamps. As a result, each optimization step draws a more heterogeneous mixture of atmospheric states and lead-time stages across the 32 GPU workers. This reduces the tendency of the global batch to be dominated by a single forecasting horizon and helps improve optimization stability during distributed training.

%% file: sections/app4_IO_strategy.tex
The transition from $0.25^\circ$ to $0.1^\circ$ global resolution poses unprecedented system-level challenges. The intrinsic precision requirements of meteorological data necessitate 32-bit storage, as 16-bit quantization introduces non-negligible numerical degradation across numerous thermodynamic variables. This high-precision representation of a dense $1801 \times 3600$ grid results in \textbf{exceptionally large individual data files—typically 4 to 5 GB each}. 

Consequently, training high-resolution weather forecasting models faces a severe I/O bottleneck, a challenge rarely encountered in the standard training pipelines of large language models or computer vision systems. The end-to-end training throughput is strictly bottlenecked by the storage-to-CPU bandwidth rather than the GPU's theoretical floating-point operations per second (FLOPs). To decouple GPU computation from disk latency and maximize hardware utilization, we employ several complementary I/O strategies.

\subsection{Hybrid I/O and Caching Strategies}

Through the following three customized data-loading strategies, our pipeline effectively masks the I/O latency and achieves \textbf{a training speedup of over $4\times$} compared to naive network storage access.

\paragraph{Data Partitioning by Machine.} 
Our storage infrastructure includes a large-scale Network Attached Storage (NAS) system alongside approximately 12 TB of high-speed NVMe SSD storage per 8-GPU machine. To leverage the significantly faster read/write performance of local SSDs compared to the NAS, we partition the entire training dataset into $n$ shards—where $n=4$ equals the number of physical machines—and distribute one unique shard to each machine. During distributed training, each machine loads only the data shard corresponding to its assigned node rank, drastically minimizing cross-node network I/O overhead and maximizing localized throughput.

\paragraph{Hybrid Storage Strategy Based on Time-ID Partitioning.} 
Despite the data partitioning, the aggregate 12 TB of SSD storage per machine is still insufficient to house the full historical dataset. To address this, we implement a time-based hybrid storage strategy: data initialized at UTC 00:00 and 12:00 (which constitute the primary cycles for operational weather forecasting) is permanently cached on the local SSDs for high-speed access. Meanwhile, data from the intermediate UTC 06:00 and 18:00 cycles resides on the NAS and is fetched dynamically. This specific allocation strategy guarantees a high cache hit rate on the SSDs while remaining within hardware limits.

\paragraph{Efficient Replay of Cached Data.} 
Our distributed training pipeline first transfers data from storage (SSD/NAS) to CPU memory before moving it to the GPU memory. Because the end-to-end throughput is bottlenecked by the sluggish I/O bandwidth from storage to CPU, we mitigate this latency through an in-memory replay mechanism. We aggressively reuse data batches currently resident in the CPU memory queue to feed the GPUs, maintaining active compute cycles whenever asynchronous I/O threads are still fetching new data from the storage layer.

\subsection{System-Level Profiling}

To accurately profile the isolated impact of our I/O optimization strategies, it is essential to decouple the storage-to-CPU bottleneck from inter-node network communication overhead (e.g., NCCL gradient synchronization across machines). Therefore, we benchmarked the training throughput on a single compute node (equipped with 8 NVIDIA A800 GPUs and local NVMe SSDs) representing one functional shard of our full 32-GPU distributed pipeline.

As detailed in Table~\ref{tab:io_performance}, the \NAME\ architecture exhibits exceptionally high computational complexity, reaching approximately 364.7 TFLOPs per optimization step per GPU. However, a baseline data-loading strategy that reads training samples directly from the network-attached storage (NAS) leads to severe GPU under-utilization. The average I/O wait time reaches 28.7 seconds per step, reducing the GPU active ratio to only 54.0\%. In contrast, our hybrid I/O and SSD partitioning strategy effectively mitigates this system bottleneck, reducing the average I/O wait time to just 0.69 seconds per step and increasing the GPU active ratio to 92.0\%. These results indicate that, for $0.1^\circ$ training, the dominant limitation is not raw computation itself, but the efficiency of the storage and data delivery pipeline.

\begin{table}[htbp]
\centering
\caption{System-level profiling of computational complexity and I/O efficiency under different data-loading strategies. To isolate the storage bottleneck from cross-node communication overhead, metrics were collected over 200 optimization steps on a single operational 8 $\times$ A800 GPU node.}
\label{tab:io_performance}
\resizebox{\textwidth}{!}{%
\begin{tabular}{l|c|cc|cc}
\toprule
\multirow{2}{*}{\textbf{Model Pipeline}} 
& \multirow{2}{*}{\shortstack{\textbf{Theoretical Compute}\\\textbf{(TFLOPs / Step / GPU)}}}
& \multicolumn{2}{c|}{\textbf{Baseline: Direct NAS Read}}
& \multicolumn{2}{c}{\textbf{Ours: Hybrid I/O \& Partitioning}} \\
\cmidrule(lr){3-4} \cmidrule(lr){5-6}
& & \textbf{GPU Active Ratio} & \textbf{I/O Wait / Step}
& \textbf{GPU Active Ratio} & \textbf{I/O Wait / Step} \\
\midrule
\NAME\ ($0.1^\circ$) & 364.7 & 54.0\% & 28,714 ms & \textbf{92.0\%} & \textbf{692 ms} \\
\bottomrule
\end{tabular}%
}
\end{table}

%% file: sections/app5_extended_discussions.tex
\subsection{Experimental Protocol for Conditional Entropy and Task Robustness}
\label{app:entropy_protocol}

In the main text (Section 3.1), we posit that ``Super-Resolution (SR) has provably lower conditional entropy than multi-step forecasting (FC)'', making it a significantly more robust vehicle for data transfer than model fine-tuning. Due to page constraints, the specific experimental procedures used to validate this core insight were omitted. Here, we detail the lightweight inference protocols designed to measure task difficulty and input perturbation sensitivity.

\subsubsection{Experiment I: Task Difficulty Comparison}
The primary objective of this experiment is to empirically demonstrate that SR is an intrinsically simpler task than temporal forecasting by directly comparing their reconstruction errors. 
\begin{itemize}
    \item \textbf{SR Task Protocol:} The model takes the low-resolution state $X_t^{0.25^\circ}$ as input and outputs the high-resolution reconstruction $\hat{X}_t^{0.1^\circ}$. The ground truth is the exact same-time high-resolution analysis $X_t^{0.1^\circ}$.
    \item \textbf{Forecasting Task Protocol:} The model takes the high-resolution state $X_t^{0.1^\circ}$ as input and predicts the future state $\hat{X}_{t+6h}^{0.1^\circ}$. The ground truth is the actual future analysis $X_{t+6h}^{0.1^\circ}$.
\end{itemize}
As reported in the main text, the SR task yields significantly lower RMSE (e.g., 13.8 vs. 23.6 for z500). From an information-theoretic perspective, the lower error indicates that the conditional entropy of same-time spatial reconstruction $H(X_t^{0.1^\circ} | X_t^{0.25^\circ})$ is substantially lower than that of cross-time prediction $H(X_{t+6h}^{0.1^\circ} | X_t^{0.1^\circ})$.

\subsubsection{Experiment II: Input Perturbation Sensitivity}
The second experiment aims to prove that forecasting, governed by chaotic atmospheric dynamics, amplifies input errors exponentially, whereas the SR mapping remains highly stable. 

We apply isotropic Gaussian noise $\varepsilon \sim \mathcal{N}(0, \sigma^2)$ at varying intensities $\sigma \in \{0.01, 0.05, 0.10, 0.20, 0.25\}$ to the input fields of both models. The sensitivity is quantified by the \textbf{Amplification Factor}, defined as the ratio of the output error change to the magnitude of the injected noise.
For the SR model:
\begin{equation}
    Y_{\text{clean}}^{SR} = \Phi_{SR}(X_t), \quad Y_{\text{noisy}}^{SR} = \Phi_{SR}(X_t + \varepsilon)
\end{equation}
\begin{equation}
    \text{Amplification}_{SR} = \frac{||Y_{\text{noisy}}^{SR} - Y_{\text{clean}}^{SR}||}{||\varepsilon||}
\end{equation}
For the Forecasting (FC) model:
\begin{equation}
    Y_{\text{clean}}^{FC} = \Phi_{FC}(X_t), \quad Y_{\text{noisy}}^{FC} = \Phi_{FC}(X_t + \varepsilon)
\end{equation}
\begin{equation}
    \text{Amplification}_{FC} = \frac{||Y_{\text{noisy}}^{FC} - Y_{\text{clean}}^{FC}||}{||\varepsilon||}
\end{equation}

As illustrated in Figure 1 of the main text, the $\text{Amplification}_{SR}$ curve remains relatively flat and strictly below $1.0$ (e.g., $0.6 \sim 0.8\times$), indicating that input information is transferred to the output with minimal and reversible loss. In sharp contrast, the $\text{Amplification}_{FC}$ curve exhibits an upward trend ($> 1.0\times$), demonstrating that small input perturbations are irreversibly magnified due to the chaotic nature of temporal evolution. This justifies our paradigm shift: generating pseudo-labels via SR is fundamentally more robust than attempting to extract fine-scale structures from a pre-trained $0.25^\circ$ forecasting model.

\subsection{Mathematical Formulation of the Data Scaling Law}

In main text Section 5.2 (Fig. 4), we presented the empirical data scaling laws, demonstrating that model performance improves as the volume of high-resolution training data increases, before eventually plateauing. To rigorously formalize this observation, we fitted the empirical validation RMSE to a generalized power-law function with an irreducible error term:
\begin{equation}
    E(D) = \alpha D^{-\beta} + E_{\infty}
\end{equation}
where $E(D)$ is the forecast error given training data volume $D$ (in years), $\alpha$ is a scaling constant, $\beta > 0$ is the characteristic scaling decay exponent, and $E_{\infty}$ represents the theoretical saturation bound (irreducible error). The precise fitted parameters are detailed in Table~\ref{tab:scaling_law_params}.

The emergence of the saturation threshold $E_{\infty}$ (evident as the curves flatten around the 18-year mark in Fig. 4) suggests that while synthesizing additional decades of data via SR improves the representation of rare meso-scale anomalies, the deterministic predictive skill is ultimately bounded by the inherent chaotic limits of atmospheric predictability and the parameter capacity of the current architecture. 

Crucially, it is important to note that the models evaluated in this scaling analysis were trained exclusively for the \textbf{6-hour single-step prediction task}, without undergoing the subsequent auto-regressive rollout fine-tuning. Consequently, the 6-hour forecast metrics (where $R^2 > 0.95$ across all variables) provide the purest, most direct reflection of the underlying data scaling mechanism, isolated from the complex error accumulation inherent in long-range unrolling. Furthermore, as revealed in Table~\ref{tab:scaling_law_params}, the scaling exponent $\beta$ exhibits a systematic decay as the forecast lead time increases (e.g., for T2M, $\beta$ drops from 2.207 at 6h to 0.792 at 120h). This compelling quantitative finding indicates that while short-term localized dynamics are highly data-hungry and benefit immensely from high-resolution data scaling, extreme long-term forecasting becomes increasingly dominated by systemic chaotic divergence, rendering pure data scaling progressively less effective. This formulation provides a rigorous guideline for balancing the computational cost of data synthesis against marginal performance gains across different forecasting horizons.

\begin{table}[htbp]
\centering
\caption{Fitted parameters of the empirical scaling law $E(D) = \alpha D^{-\beta} + E_{\infty}$ for representative atmospheric variables across different lead times. $R^2$ indicates the goodness of fit.}
\label{tab:scaling_law_params}
\resizebox{0.95\textwidth}{!}{
\begin{tabular}{llcccc}
\toprule
\textbf{Lead Time} & \textbf{Variable} & \textbf{Scaling Const ($\alpha$)} & \textbf{Exponent ($\beta$)} & \textbf{Irreducible Error ($E_{\infty}$)} & \textbf{$R^2$} \\
\midrule
6h & T2M & 0.817 & 2.207 & 0.682 & 0.968 \\
6h & U10 & 1.697 & 2.681 & 0.690 & 0.974 \\
6h & MSL & 41.950 & 1.192 & 33.370 & 0.959 \\
6h & Z500 & 35.062 & 1.012 & 20.684 & 0.951 \\
\midrule
72h & T2M & 1.164 & 1.189 & 1.107 & 0.938 \\
72h & U10 & 1.192 & 0.745 & 1.549 & 0.795 \\
72h & MSL & 268.332 & 1.126 & 144.224 & 0.802 \\
72h & Z500 & 236.864 & 0.879 & 122.599 & 0.785 \\
\midrule
120h & T2M & 1.146 & 0.792 & 1.488 & 0.870 \\
120h & U10 & 1.427 & 0.481 & 2.244 & 0.749 \\
120h & MSL & 270.943 & 0.767 & 277.567 & 0.689 \\
120h & Z500 & 285.245 & 0.542 & 239.073 & 0.693 \\
\bottomrule
\end{tabular}
}
\end{table}

\subsection{Extended Discussion on Physical Consistency of SR-Generated Fields}
The forecast improvements shown in Figs.~4--8 of the main paper demonstrate
that SR-generated pseudo-labels provide effective high-resolution training
signals for downstream forecasting. However, these downstream gains alone do
not fully rule out the concern that SR may hallucinate physically inconsistent
high-frequency details. We therefore directly examine whether the SR-generated
fields preserve key diagnostic relationships of the reference atmosphere, using
three complementary physical perspectives.

First, we evaluate the \textbf{moisture structure} using relative humidity (RH)
diagnosed from temperature and specific humidity, which tests whether
thermodynamic and moisture variables remain mutually coherent after SR.
Second, we evaluate the \textbf{dynamical structure} using geostrophic wind speed
$|\mathbf{V}_g|$ diagnosed from geopotential height $Z$, which reflects whether
the reconstructed pressure-level fields preserve large-scale balanced dynamics.
Third, we evaluate the \textbf{thermodynamic stratification} using potential
temperature differences $\Delta\theta$ between pressure levels, which measures
whether the vertical thermal structure remains consistent.

Table~\ref{tab:physical_consistency} reports latitude-weighted correlations
between the SR-generated diagnostics and those computed from the 0.1$^\circ$
analysis. The consistently high correlations indicate that the per-variable SR
fields preserve key moisture, dynamical, and thermodynamic structures despite
being generated independently for each variable. In addition, the power spectral
analysis in Fig.~\ref{fig:psd} shows that the SR fields closely match the
0.1$^\circ$ analysis over most resolved wavelengths, without introducing
systematic spurious high-frequency energy.

\noindent
\begin{minipage}{\linewidth}
\centering
\captionsetup{justification=centering}
\captionof{table}{Physical consistency of per-variable SR fields. Values are latitude-weighted correlations against EC 0.1$^\circ$ analysis.}
\label{tab:physical_consistency}
\scriptsize
\setlength{\tabcolsep}{3.0pt}
\renewcommand{\arraystretch}{0.92}
\begin{tabular}{c c | c c | c c}
\toprule
Level (hPa) & RH
& Level (hPa) & $|\mathbf{V}_g|$
& Layer (hPa) & $\Delta\theta$ \\
\midrule
500 & 0.962 & 500 & 0.821 & 700--500 & 0.972 \\
850 & 0.940 & 850 & 0.843 & 850--700 & 0.964 \\
925 & 0.961 & 925 & 0.855 & 925--850 & 0.937 \\
\bottomrule
\end{tabular}
\end{minipage}

\begin{figure}
    \centering
    \includegraphics[width=0.8\linewidth]{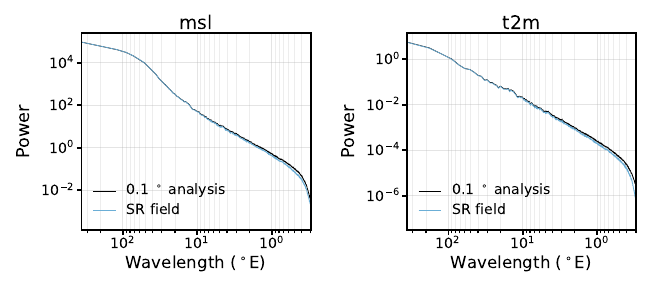}
    \caption{Power spectra of SR-generated fields and EC 0.1$^\circ$ analysis fields, using 2-meter temperature (t2m) and mean sea-level pressure (msl) as examples.}
    \label{fig:psd}
\end{figure}

\subsection{Extended Discussion on Baseline Selection and Forecasting Dynamics}
In Section 5.3 of the main text, our comparison includes IFS-HRES and Baguan+swin2sr over the full forecast range, and further extends to GraphCast+swin2sr, Pangu+swin2sr, and Baguan+GHR within the short-term 3-day window. It is essential to clarify that these ``+swin2sr'' configurations represent a \textbf{zero-shot post-processing paradigm}. In this setup, the respective $0.25^\circ$ pre-trained forecasting models are executed entirely frozen, and their output states are subsequently upscaled to $0.1^\circ$ during inference using our super-resolution model, without any end-to-end fine-tuning on the high-resolution grid. The Pangu and GraphCast forecasts are generated using
ECMWF's \texttt{ai-models} framework and its public model plugins\cite{Raoult_ai-models}.

This specific evaluation design is intentional, serving to expose the severe information bottleneck inherent to decoupled post-processing routes. At early forecasting lead times (short-range), the atmospheric state still contains rich high-frequency spatial structures and sharp mesoscale gradients. Coarse-resolution ($0.25^\circ$) models inherently fail to resolve these features, resulting in overly smoothed output states. Attempting to reconstruct such lost fine-scale information post hoc via super-resolution is fundamentally ill-posed and tends to amplify forecast errors. This information loss helps explain why our end-to-end \NAME model achieves its largest advantages in the short-to-medium range, where fine-scale structures are still most prominent.

Interestingly, as the auto-regressive rollout extends to longer lead times (e.g., 10 days), the performance gap between \NAME and the post-processing baselines gradually narrows. This trend is closely related to the chaotic nature of atmospheric dynamics and the accumulated error growth in deterministic forecasting. At extended lead times, all deterministic auto-regressive models, including \NAME, tend to lose variance and produce progressively smoother forecast states. As fine-scale structures become increasingly attenuated, the disadvantage imposed by the $0.25^\circ$ information bottleneck becomes less pronounced, which can make post-hoc upscaling appear relatively more competitive in the long range.

To ensure a fair comparison and to show that our gains are not simply due to avoiding post-processing smoothing, we also include \textbf{Baguan+GHR} (reproduced following \cite{han2024fengwu}) in our evaluation. This baseline represents a strong coarse-to-fine transfer learning paradigm, in which the forecasting model is further fine-tuned on the $0.1^\circ$ analysis grid. By consistently outperforming both the zero-shot post-processing baselines and the fully fine-tuned Baguan+GHR model within the critical 3-day operational window, our results indicate that the proposed ``synthetic-plus-real'' data scaling strategy provides a more effective route toward high-resolution AI weather forecasting.

\subsection{Comparison Against Station Observation}

While the grid-to-grid evaluations against EC analysis in the main text provide a comprehensive global perspective, gridded analysis data may still contain model-dependent diagnostic uncertainties. To further assess whether the $0.1^\circ$ resolution of \NAME translates into tangible improvements in real-world forecasting, we extend the evaluation to direct ground station measurements through grid-to-point verification.

For this evaluation, we construct a global ground-truth subset comprising measurements from \textbf{13,058} weather stations worldwide. The raw station observations were accessed via the \texttt{Meteostat} API\cite{meteostat}, which aggregates data from public providers including NOAA and DWD. We use hourly \texttt{wspd} (wind speed, corresponding to $\sqrt{u10^2 + v10^2}$) and hourly \texttt{temp} (air temperature, corresponding to T2M) from January 1, 2025 to December 20, 2025. To map gridded model outputs to station coordinates, we apply standard bilinear interpolation from the surrounding grid points before computing forecast errors.

Consistent with the grid-based evaluation, \NAME also performs strongly in station-based verification. As shown in Fig.~\ref{fig:station_eval}, \NAME achieves lower RMSE than both IFS-HRES and Baguan+swin2sr for 2-meter temperature and wind speed across most lead times. This result further supports the robustness of \NAME in localized surface forecasting.

\begin{figure}[htbp]
    \centering
    \includegraphics[width=\textwidth]{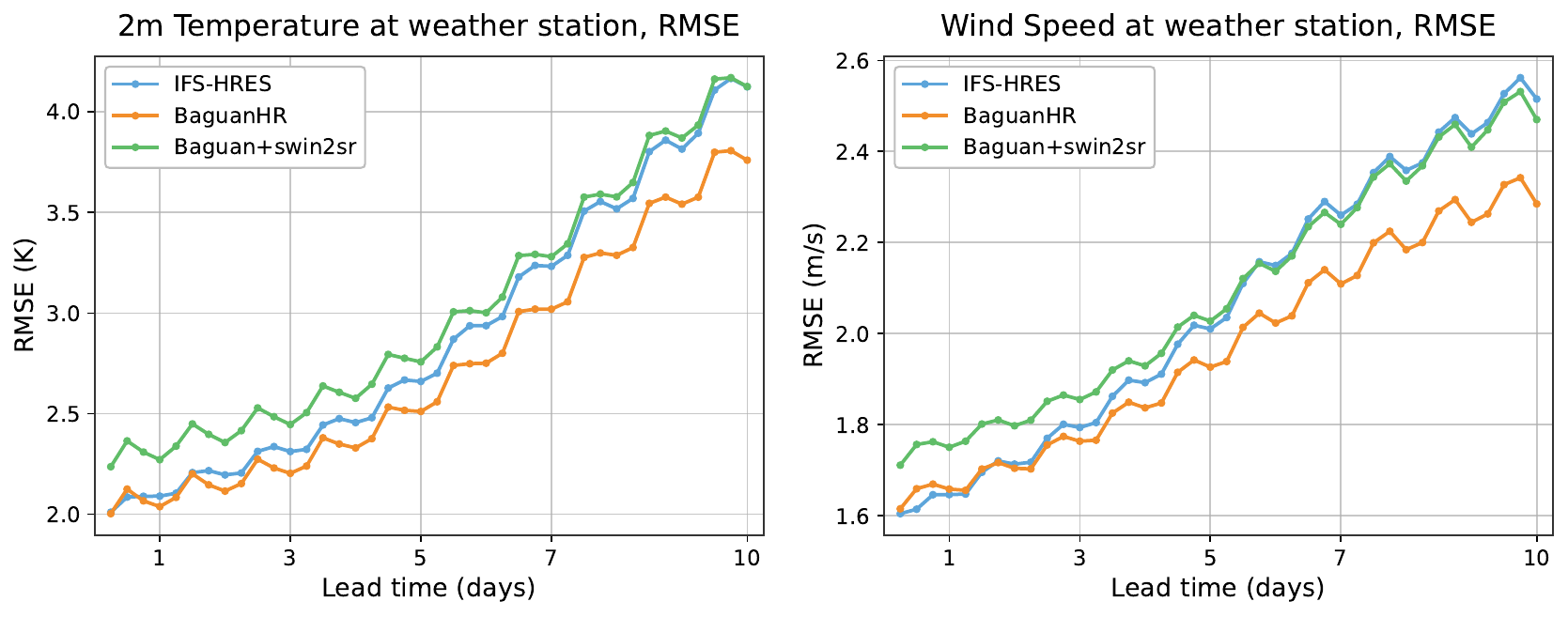}
    \caption{Station-based RMSE as a function of lead time for 2-meter temperature (left) and wind speed (right). \NAME consistently achieves lower errors than both IFS-HRES and Baguan+swin2sr across most lead times.}
    \label{fig:station_eval}
\end{figure}

\subsection{Visualization of High-Resolution Forecasts}

To qualitatively demonstrate the structural fidelity and precision of our proposed framework, Figures~\ref{fig:visualization_t2m}, \ref{fig:visualization_u10}, and \ref{fig:visualization_q_850} present comprehensive spatial visualizations of the 2-meter temperature (T2M), the 10-meter U-component of wind (U10), and specific humidity at 850 hPa (Q850), respectively. All forecast trajectories are initialized at \textbf{00:00 UTC on December 10, 2025}.

Each figure compares the ground truth (ECMWF $0.1^\circ$ analysis) against the predictions generated by three distinct systems: our end-to-end \NAME model, the zero-shot post-processing baseline (Baguan+swin2sr), and the operational physical baseline (IFS-HRES). Corresponding error distribution maps (Predicted $-$ Analysis) are also provided. These examples serve as qualitative case studies, showing that \NAME generally produces predictions that are visually closer to the analysis and exhibits lower overall error than the post-processing baseline.

\begin{figure}[htbp]
    \centering
    \includegraphics[width=\textwidth]{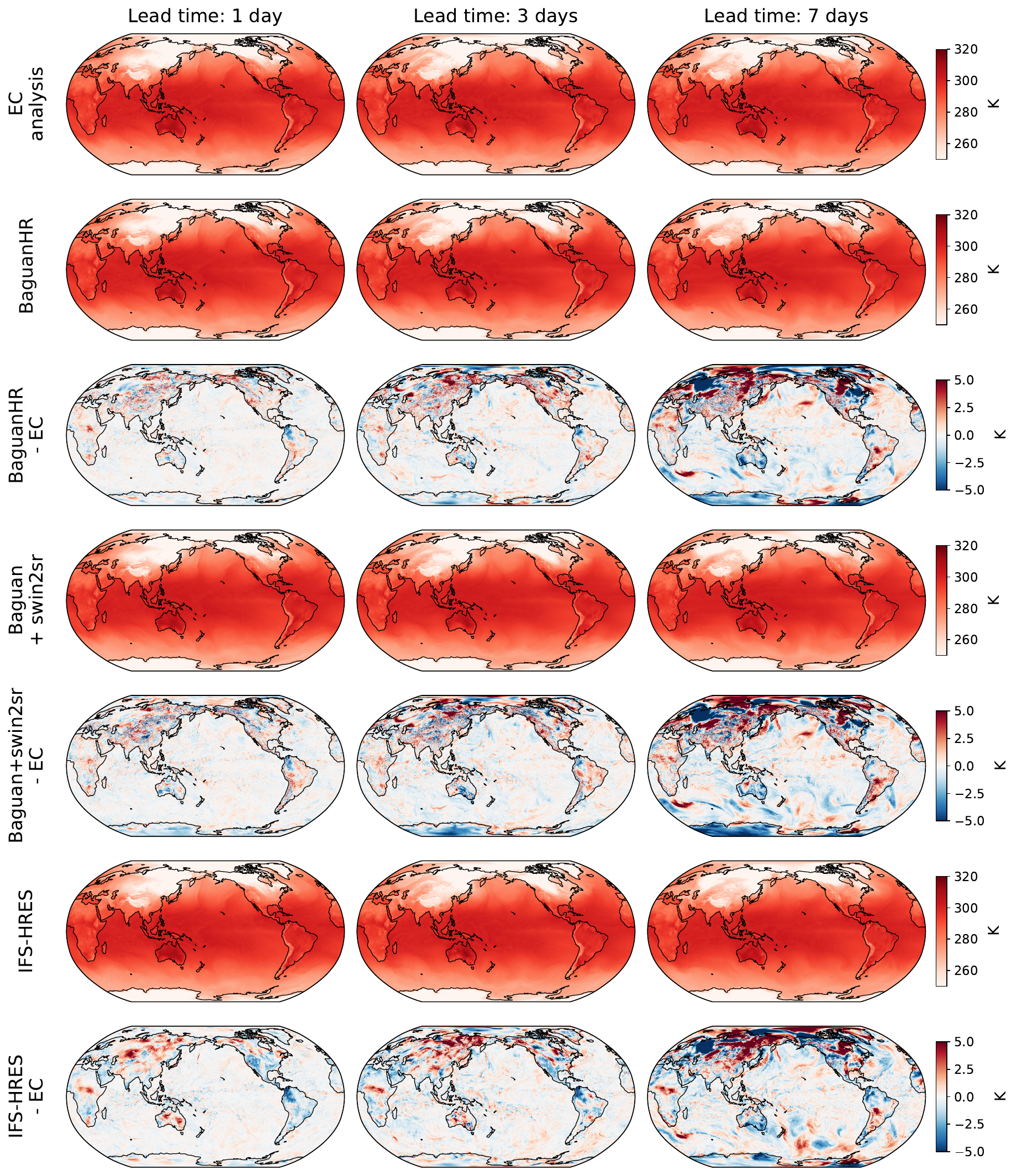}
    \caption{Qualitative comparison of \textbf{2-meter temperature (T2M)} forecasts initialized at 00:00 UTC, Dec 10, 2025.}
    \label{fig:visualization_t2m}
\end{figure}

\begin{figure}[htbp]
    \centering
    \includegraphics[width=\textwidth]{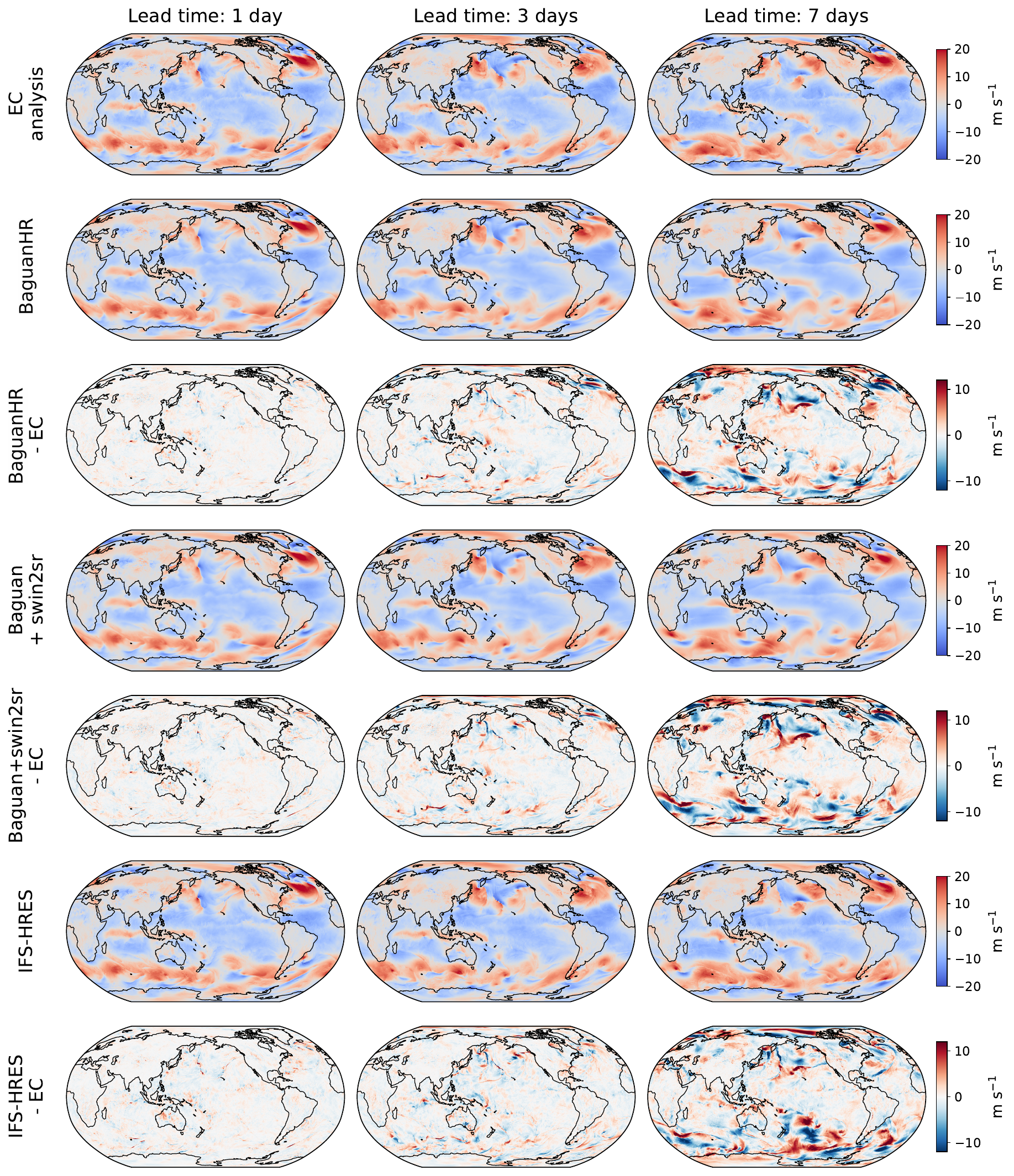}
    \caption{Qualitative comparison of the \textbf{10-meter U-component of wind (U10)} initialized at 00:00 UTC, Dec 10, 2025.}
    \label{fig:visualization_u10}
\end{figure}

\begin{figure}[htbp]
    \centering
    \includegraphics[width=\textwidth]{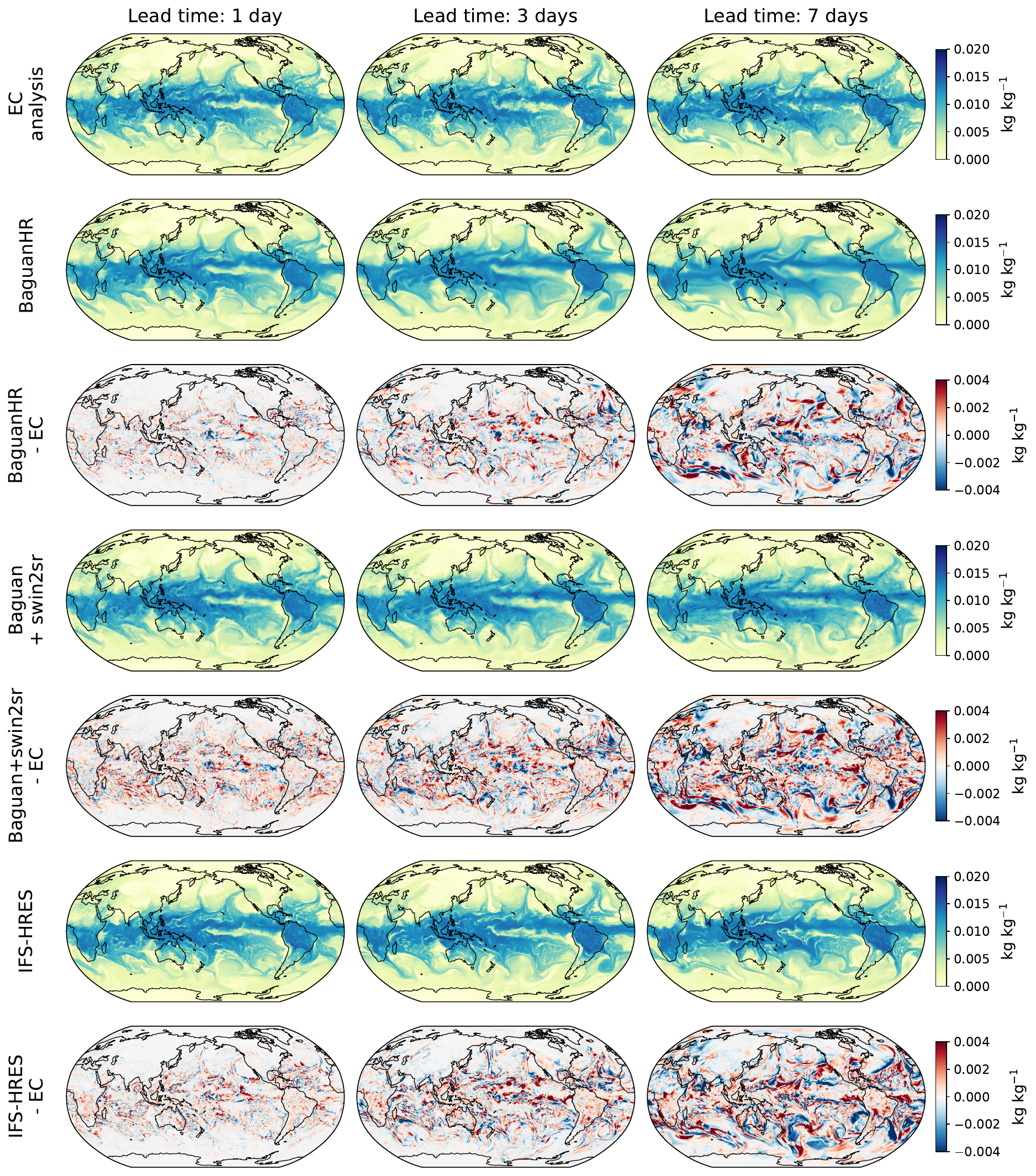}
    \caption{Qualitative comparison of \textbf{specific humidity at 850 hPa (Q850)} initialized at 00:00 UTC, Dec 10, 2025.}
    \label{fig:visualization_q_850}
\end{figure}

\subsection{The Performance of \NAME in Tracking Tropical Cyclones}

\begin{figure}[!t]
    \centering
    \includegraphics[width=\textwidth]{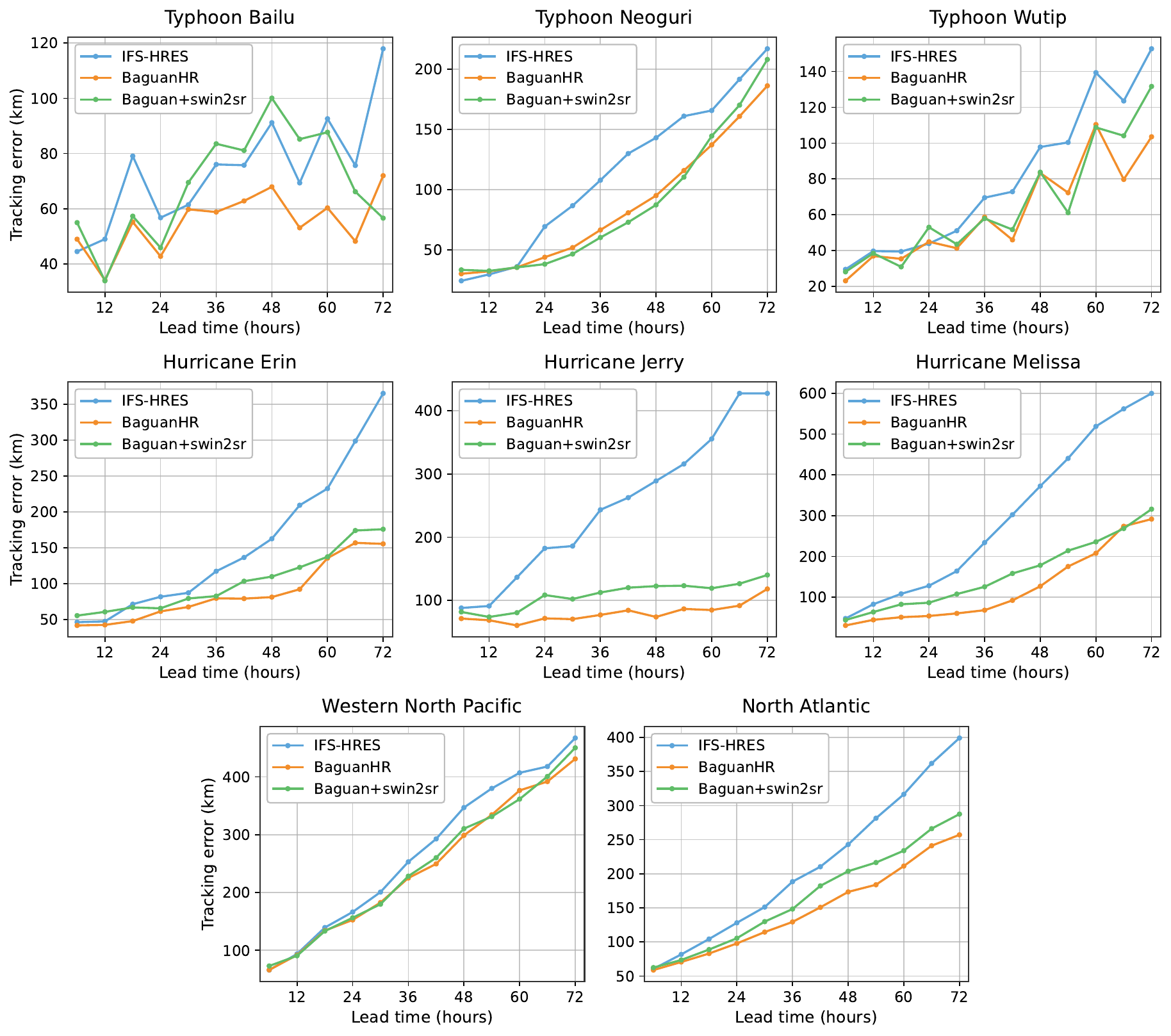}
    \caption{Comparison of tropical cyclone tracking error versus lead time for six representative storms and the corresponding basin-wide averages over the Western North Pacific and North Atlantic. The ground truth tracks are derived from the IBTrACS dataset. Errors for individual storms are averaged across multiple viable initialization timestamps.}
    \label{fig:case_study_typhoon}
\end{figure}

Building on the individual case study in Fig.~8(a) and (b) of the main text, we further evaluate the robustness of our framework for tropical cyclone (TC) tracking across a broader set of historical storms. Figure~\ref{fig:case_study_typhoon} shows tracking error (km) as a function of forecast lead time for six representative storms, together with basin-mean errors over the two major cyclogenesis regions: the Western North Pacific and the North Atlantic. The ground-truth TC tracks used in both the main text and this supplementary analysis are taken from the International Best Track Archive for Climate Stewardship (IBTrACS) \cite{knapp2010international, gahtan2024ibtracs}, which provides the most comprehensive and globally unified archive of historical tropical cyclone best-track data.

For a more statistically robust evaluation, the tracking error of each storm is not computed from a single forecast only. Instead, we generate multiple forecast trajectories initialized from different valid 00:00/12:00 UTC timestamps over the storm lifetime, and report the mean tracking error across these initialization times at each lead time. As shown in Fig.~\ref{fig:case_study_typhoon}, \NAME achieves consistently strong TC tracking performance across both individual storms and basin-mean statistics. In particular, over the North Atlantic and for several representative hurricanes, \NAME generally maintains lower tracking errors than both Baguan+swin2sr and IFS-HRES throughout the 72-hour window. Although the margin is smaller over the more challenging Western North Pacific, \NAME remains competitive overall, indicating that synthetic-data-based high-resolution scaling provides a reliable basis for TC trajectory prediction.

%% file: sections/app6_limitations.tex
This study relies entirely on public, gridded meteorological datasets and does not involve human subjects, personal data, or sensitive individual information. More broadly, our framework provides a practical data-centric route for improving high-resolution weather forecasting under limited compute and data availability. By converting long-term coarse-resolution reanalyses into synthetic 0.1° training data, it may help widen access to stronger extreme-weather forecasting systems for institutions and regions that cannot afford large operational modeling pipelines.

Nevertheless, several limitations remain. First, our synthetic high-resolution
data are generated by per-variable super-resolution models trained independently
for each field. While this design improves variable-specific fidelity, it may
not fully preserve cross-variable coupling and multivariate physical consistency
at the finest scales. Second, although 0.1$^\circ$ resolution (roughly 9 km) is
effective for mesoscale forecasting, it is still insufficient to explicitly
resolve microscale convective extremes such as tornadoes, hail, or highly
localized severe storms. Third, forecast errors still accumulate with lead time
under auto-regressive rollout. In particular, \NAME brings limited additional
gains at 10--15 day lead times, consistent with the practical 10-day
predictability limit of instantaneous midlatitude weather~\cite{zhang2019predictability};
at such horizons, forecasts are mainly controlled by large-scale atmospheric
patterns and fine-scale details become weakly predictable. Future work will
explore ensemble forecasting to better characterize forecast uncertainty and
extend the model's practical predictability.